\documentclass[twoside]{article}
\usepackage{amsmath,amssymb,amsthm,amsfonts,mathtools,bm}
\usepackage[preprint]{aistats2027}
\usepackage{url}
\usepackage{algorithm}
\usepackage[noend]{algpseudocode}
\usepackage{booktabs}
\usepackage{graphicx}
\usepackage{xcolor}
\usepackage{tikz}
\usetikzlibrary{arrows,positioning}
\newcommand{\defeq}{\coloneqq}
\definecolor{figNavy}{HTML}{243B53}
\definecolor{figBlue}{HTML}{0072B2}
\definecolor{figOrange}{HTML}{D55E00}
\definecolor{figGreen}{HTML}{009E73}
\definecolor{figPurple}{HTML}{6F4C9B}
\definecolor{figGold}{HTML}{D89000}
\definecolor{figTeal}{HTML}{008A99}

\newcommand{\ch}[1]{#1}

\newif\ifreviewadditions
\reviewadditionstrue

\newcommand{\methodname}{DI-SCUSUM}

\newcommand{\methodtag}{\mathrm{DI}}
\newtheorem{theorem}{Theorem}
\newtheorem{lemma}{Lemma}
\newtheorem{proposition}{Proposition}

\newtheorem{remark}{Remark}

\begin{document}

%
\runningtitle{Quickest Change Detection with Diffusion-Integrated Scores}

%
\runningauthor{Adibi, Maleki, Kulkarni, and Poor}

\twocolumn[

\aistatstitle{Quickest Change Detection with Diffusion-Integrated Scores}

\aistatsauthor{Arman Adibi$^{*}$ \And Mohammadreza Maleki$^{*}$ \And
Sanjeev Kulkarni \And H. Vincent Poor}

\aistatsaddress{Augusta University \And Toronto Metropolitan University \And
Princeton University \And Princeton University} ]

\begingroup
\renewcommand{\thefootnote}{*}
\footnotetext{These authors contributed equally to this work.}
\endgroup

\begin{abstract}
Classical CUSUM relies on the log-likelihood ratio of the underlying distributions, which cannot generally be computed from finite pre- and post-change samples alone.
We propose diffusion-integrated score CUSUM (DI-SCUSUM), a training-free detector.
We add Gaussian noise to the samples to form two smooth density estimates and calculate their Hyv\"arinen scores exactly, without training a score network.
For each incoming observation, we sample a diffusion time, perturb the observation, and use the importance-weighted score difference as an increment in the DI-SCUSUM recursion.
Under the assumption that observations follow the fixed empirical distributions, the post-change mean increment is proportional to the Kullback--Leibler (KL) divergence from the smoothed post-change to the smoothed pre-change empirical distribution.
We establish exponential false-alarm scaling and a first-order delay bound that, for a fixed threshold and increment scaling, is inversely proportional to the KL divergence.
In the calibrated anisotropic Gaussian simulation, DI-SCUSUM nearly matches likelihood-ratio CUSUM and reduces the measured detection delay by about 91\% relative to score-based CUSUM.
On MNIST and Oxford-IIIT Pet, DI-SCUSUM also has lower empirical conditional detection delay than SCUSUM at comparable false-alarm levels.
\end{abstract}

\section{Introduction}

Quickest change detection (QCD) seeks to identify an unknown distributional
change with minimal delay subject to false-alarm control. When the pre- and
post-change densities are known, CUSUM
\cite{page1954continuous,poor2008quickest} accumulates log-likelihood-ratio evidence and
provides the classical benchmark. We consider the sample-based setting, where finite samples from each distribution are collected separately from the monitored stream. The underlying
log-likelihood ratio is unavailable, so a sequential procedure must extract
computable evidence of a distributional change from each new observation.
In a CUSUM recursion, the evidence should have a negative mean before the change to prevent sustained accumulation and a positive mean afterward to drive the statistic toward the alarm threshold.

\definecolor{figInk}{HTML}{263746}
\definecolor{figBlue}{HTML}{2F6B9A}
\definecolor{figOrange}{HTML}{C96B2C}
\definecolor{figTeal}{HTML}{238A7E}
\definecolor{figPurple}{HTML}{765A9D}
\definecolor{figGold}{HTML}{B88720}
\definecolor{figPanel}{HTML}{F7F9FB}

\begin{figure*}[t]
\centering
\resizebox{0.9\textwidth}{!}{%
\begin{tikzpicture}[
panel/.style={
  draw=figInk!28, fill=figPanel, rounded corners=3pt,
  line width=0.65pt
},
heading/.style={
  font=\footnotesize\bfseries, text=figInk
},
section/.style={
  font=\scriptsize\bfseries, text=figInk
},
label/.style={
  font=\scriptsize, text=figInk, align=center
},
axis/.style={
  ->, >=stealth, draw=figInk!62, line width=0.6pt
},
preline/.style={
  draw=figBlue, line width=1.05pt
},
postline/.style={
  draw=figOrange, line width=1.05pt
},
statline/.style={
  draw=figTeal!90!black, line width=1.1pt
},
change/.style={
  draw=figOrange, densely dashed, line width=0.75pt
},
alarm/.style={
  draw=figTeal!90!black, densely dashed, line width=0.75pt
},
threshold/.style={
  draw=figInk!65, densely dashed, line width=0.8pt
},
delay/.style={
  <->, >=stealth, draw=figInk!75, line width=0.8pt
},
samplemodel/.style={
  draw=figInk!25, fill=white, rounded corners=2pt,
  line width=0.65pt, minimum height=0.72cm,
  text width=3.05cm, font=\scriptsize, align=center
},
stage/.style={
  draw=figInk!32, fill=white, rounded corners=2.5pt,
  line width=0.7pt, minimum height=2.12cm,
  minimum width=1.72cm, inner sep=0pt
},
flow/.style={
  ->, >=stealth, draw=figInk!85, line width=0.85pt,
  shorten <=0.5pt, shorten >=0.5pt
},
drift/.style={
  draw=figInk!20, fill=white, rounded corners=2pt,
  line width=0.6pt, minimum height=0.48cm,
  text width=7.45cm, font=\scriptsize,
  text=figInk, align=center
}]

\path[panel] (0,0) rectangle (8.00,5.45);
\path[panel] (8.25,0) rectangle (17.40,5.45);

\node[heading, anchor=west] at (0.30,5.12)
  {(a) Change and detection};
\node[heading, anchor=west] at (8.55,5.12)
  {(b) DI-SCUSUM construction};

\begin{scope}[shift={(0.72,3.02)}]
  \path[fill=figBlue!5]
    (0,-0.05) rectangle (3.15,1.42);
  \path[fill=figOrange!6]
    (3.15,-0.05) rectangle (6.55,1.42);

  \draw[axis] (0,0) -- (6.58,0);
  \draw[axis] (0,-0.04) -- (0,1.45)
    node[above] {\(X_n\)};
  \node[label, anchor=north east] at (6.52,-0.04)
    {time \(n\)};

  \draw[change] (3.15,-0.02) -- (3.15,1.38);
  \node[label, text=figInk, anchor=south]
    at (3.15,1.38) {change \(\nu\)};

  \draw[preline] plot coordinates {
    (0.10,.25) (.42,.43) (.74,.20) (1.06,.38)
    (1.38,.18) (1.70,.42) (2.02,.23) (2.34,.45)
    (2.66,.21) (2.98,.36) (3.12,.27)
  };

  \draw[postline] plot coordinates {
    (3.12,.27) (3.34,.87) (3.66,1.04) (3.98,.82)
    (4.30,1.10) (4.62,.90) (4.94,1.17) (5.26,.89)
    (5.58,1.08) (5.90,.93) (6.22,1.13)
  };

  \node[label, text=figInk] at (1.55,.73)
    {pre-change \(P_\infty\)};
  \node[label, text=figInk] at (4.90,.43)
    {post-change \(P_1\)};
\end{scope}

\begin{scope}[shift={(0.72,.52)}]
  \draw[axis] (0,0) -- (6.58,0);
  \draw[axis] (0,-0.04) -- (0,1.62)
    node[above] {\(Z_n\)};
  \node[label, anchor=north east] at (6.52,-0.04)
    {time \(n\)};

  \draw[threshold] (0,1.02) -- (6.55,1.02);
  \node[label, text=figInk, anchor=south west]
    at (0.15,1.02) {threshold \(\tau\)};

  \draw[change] (3.15,-0.02) -- (3.15,1.52);
  \draw[alarm] (4.72,-0.02) -- (4.72,1.52);

  \draw[statline] plot coordinates {
    (0.06,.04) (.38,.20) (.70,.03) (1.02,.17)
    (1.34,.04) (1.66,.25) (1.98,.03) (2.30,.19)
    (2.62,.05) (2.94,.20) (3.12,.06) (3.38,.23)
    (3.64,.40) (3.90,.55) (4.16,.73) (4.42,.91)
    (4.72,1.14) (4.98,1.31) (5.24,1.24) (5.50,1.42)
    (5.76,1.34) (6.04,1.50) (6.32,1.42)
  };

  \fill[figTeal!90!black] (4.72,1.14) circle (2.2pt);

  \node[label, text=figInk, anchor=north]
    at (3.15,-0.03) {\(\nu\)};
  \node[label, text=figInk, anchor=north]
    at (4.72,-0.03) {\(T_{\mathrm{DI}}\)};

  \draw[delay] (3.15,1.42) -- (4.72,1.42)
    node[label, text=figInk, midway, above=2pt]
    {post-change samples \(T_{\mathrm{DI}}-\nu+1\)};
\end{scope}

\node[section, anchor=west] at (8.62,4.63)
  {Offline sample-based models};

\node[
  samplemodel,
  draw=figBlue!70,
  text=figInk
] (premodel) at (10.63,4.05)
  {\textbf{Pre-change}\\[-1pt]
   \(\mathcal U\longmapsto
   \widehat p_{\infty,t}^{\varepsilon}\)};

\node[
  samplemodel,
  draw=figOrange!72,
  text=figInk
] (postmodel) at (15.02,4.05)
  {\textbf{Post-change}\\[-1pt]
   \(\mathcal V\longmapsto
   \widehat p_{1,t}^{\varepsilon}\)};

\draw[figInk!18, line width=0.55pt]
  (8.62,3.48) -- (17.05,3.48);

\node[section, anchor=west] at (8.62,3.27)
  {Online update for observation \(X_n\)};

\node[stage] (diffuse) at (9.59,2.02) {};
\node[stage] (score) at (11.82,2.02) {};
\node[stage] (weight) at (14.05,2.02) {};
\node[stage] (update) at (16.28,2.02) {};

\foreach \stagebox/\stagetitle in {
  diffuse/{1. Diffuse},score/{2. Score},
  weight/{3. Weight},update/{4. Update}} {
  \node[font=\scriptsize\bfseries, text=figInk, anchor=north,
        inner sep=0pt] at ([yshift=-0.15cm]\stagebox.north)
    {\stagetitle};
}

\node[label, text width=1.50cm, inner sep=0pt] at (9.59,1.84)
  {draw \(t_n,W_n\);\\
   perturb \(X_n\)\\
   to form \(Y_n\)};
\node[label, text width=1.50cm, inner sep=0pt] at (11.82,1.84)
  {evaluate score differences\\
   \(\Delta_{t_n}(Y_n)\)};
\node[label, text width=1.50cm, inner sep=0pt] at (14.05,1.84)
  {multiply by\\
   \(2\lambda/f(t_n)\)\\
   to obtain\\
   \(z_n^{\mathrm{DI}}\)};
\node[label, text width=1.50cm, inner sep=0pt] at (16.28,1.84)
  {reflect \(Z_n\);\\
   alarm when\\
   \(Z_n\geq\tau\), yielding\\
   \(T_{\mathrm{DI}}\)};

\draw[flow] (diffuse.east) -- (score.west);
\draw[flow] (score.east) -- (weight.west);
\draw[flow] (weight.east) -- (update.west);

\node[drift] at (12.82,.52)
  {\textcolor{figInk}{
     pre-change:
     \(\mathbb E_\infty[z_n^{\mathrm{DI}}]<0\)}
   \qquad
   \textcolor{figInk}{
     post-change:
     \(\mathbb E_1[z_n^{\mathrm{DI}}]>0\)}};

\end{tikzpicture}%
}
\caption{\textbf{DI-SCUSUM.}
(a) The change begins at $X_\nu$; delay counts post-change observations through the alarm. Accumulated evidence crosses $\tau$ and
triggers an alarm. (b) Pre- and post-change samples define Gaussian-smoothed models. Each new observation is perturbed at a sampled diffusion time; the resulting score difference is importance-weighted and accumulated by CUSUM to detect a distributional change.}
\label{fig:cfdiscusum-overview}
\end{figure*}

\paragraph{Related Work.}
Likelihood-ratio CUSUM is the classical QCD benchmark, with
false-alarm and delay guarantees for known distributional models
\cite{page1954continuous,lorden1971procedures,moustakides1986optimal,
pollak1985optimal,poor2008quickest,lai2011quickestsearch}.
When likelihoods are unavailable, LPA-CUSUM estimates log-partition
ratios for unnormalized models \cite{adibi2025lpa}, while the Scan
$B$-statistic and CALM use kernel comparisons to detect changes from
samples \cite{li2019scanb,cobb2022calm}.
SCUSUM replaces log-likelihood ratios with normalization-invariant
Hyv\"arinen-score differences \cite{hyvarinen2005score,wu2023scusum};
its mean evidence depends on single-scale Fisher divergence. Denoising score matching has also been used to learn score functions for CUSUM-based change detection \cite{zhou2025denoising}.
Matrix-weighted score methods modify the score geometry for hypothesis
testing and change-point detection \cite{moushegian2025diffusion}.
Diffusion models instead provide scores across noise levels
\cite{vincent2011denoising,ho2020ddpm,karras2022elucidating}, and
Gaussian-smoothed empirical distributions admit closed-form scores
\cite{scarvelis2025closedform}.
A central question is how to integrate score evidence across Gaussian smoothing scales so that the resulting mean increments recover directed KL divergences and support sequential change-detection guarantees.

We therefore propose diffusion-integrated score CUSUM (DI-SCUSUM), a training-free procedure that combines score evidence across Gaussian smoothing scales to detect distributional changes using pre- and post-change samples. Gaussian smoothing converts the pre- and post-change samples into two smooth densities with closed-form Hyv\"arinen scores. For each incoming observation, we sample a diffusion time, add Gaussian noise at the corresponding scale, and compute the score difference. We weight the score difference by the inverse of the diffusion-time sampling density and accumulate the resulting increments using CUSUM. When observations follow distinct fixed empirical distributions, the relative de Bruijn identity \cite{stam1959information,valerotoranzo2018debruijn} links the mean increments to directed KL divergences between their Gaussian-smoothed versions, giving a negative mean before the change and a positive mean afterward. This connection supports the false-alarm and detection-delay guarantees.

\paragraph{Contributions.}
Our contributions are:
(1) A sample-based procedure for sequential change detection using
closed-form Hyv\"arinen scores and an unbiased importance-sampling
estimate for the diffusion-time integral of their difference.
(2) Directed KL identities for the mean increments under fixed empirical
plug-in models, with exponential false-alarm scaling and first-order delay
bounds under the stated assumptions.
(3) A finite-window information identity and an anisotropic Gaussian
analysis showing when integration recovers information lost by
single-scale SCUSUM. (4) A diffusion-time sampling density that minimizes the post-change increment's second moment, with Monte Carlo, quadrature, and Rao--Blackwellized implementations \cite{rao1945information,blackwell1947conditional}, and an evaluation of the quadrature implementation on Gaussian and image data.

\section{Methodology}
\label{sec:methodology}
QCD monitors a data stream for an unknown transition
from a pre-change to a post-change regime and seeks to raise an alarm quickly
subject to false-alarm control. Figure~\ref{fig:cfdiscusum-overview}(a)
illustrates this objective: the change occurs at \(\nu\), the nonnegative
statistic \(Z_n\) crosses the threshold at the alarm time \(T\), and the
delay counts the observations from $X_\nu$ through $X_T$, inclusive: $T-\nu+1$. Increasing the threshold reduces
false alarms but generally increases detection delay.

Figure~\ref{fig:cfdiscusum-overview}(b) shows how DI-SCUSUM turns
each observation into evidence of a distributional change. We first collect
pre- and post-change samples separately from the monitored stream and
Gaussian-smooth them to obtain two density estimates with closed-form
Hyv\"arinen scores. For each new observation, we sample a diffusion time,
add Gaussian noise at the corresponding scale, and compare the two scores.
We weight the score difference to account for the sampled time and accumulate
it using the reflected CUSUM recursion. Negative mean evidence before the
change opposes accumulation, while positive mean evidence afterward drives
the statistic toward the alarm threshold.

We keep the pre- and post-change sample sets fixed throughout the
analysis; all theoretical expectations are taken conditional on these sets.

\subsection{Change Model and Gaussian Smoothing}
\label{sec:score-evidence}

Let \(X_1,X_2,\ldots\in\mathbb R^d\) be independent observations and let \(\mathcal F_n=\sigma(X_k,t_k,W_k:1\leq k\leq n)\)
include the observations and the independent auxiliary draws used by
Algorithm~\ref{alg:cf-discusum}.
Let \(\nu\geq1\) index the first post-change observation (with \(\nu=1\) denoting an immediate change). We assume
\[
X_1,\ldots,X_{\nu-1}\stackrel{\mathrm{i.i.d.}}{\sim}P_\infty,
\qquad
X_\nu,X_{\nu+1},\ldots\stackrel{\mathrm{i.i.d.}}{\sim}P_1,
\]
where \(P_\infty\) and \(P_1\) denote the pre- and post-change
probability distributions.
Expectations under no change and a change at \(\nu\) are denoted by
\(\mathbb E_\infty\) and \(\mathbb E_\nu\). The alarm time \(T\) is a stopping time with respect to
\(\mathcal F_n\): the decision uses only observations and auxiliary draws
available by time \(n\).
We measure the false alarms by the
average run length
\[
\operatorname{ARL}(T)=\mathbb E_\infty[T]
\]
and measure the worst-case conditional mean delay by Pollak's criterion
\cite{pollak1985optimal}
\[
\operatorname{CADD}(T)
=\sup_{\nu\geq1}
\mathbb E_\nu[T-\nu+1\mid T\geq\nu].
\]
Given a target \(\gamma>1\), we minimize CADD subject to
\(\operatorname{ARL}(T)\geq\gamma\), requiring an average of at least
\(\gamma\) observations before an alarm under no change. Larger
\(\gamma\) therefore requires less frequent false alarms
\cite{page1954continuous,lorden1971procedures,moustakides1986optimal}.

Let
\(\mathcal U=\{u_i\}_{i=1}^{N_0}\) and
\(\mathcal V=\{v_j\}_{j=1}^{N_1}\) be the pre- and post-change reference
sets, of sizes \(N_0=|\mathcal U|\) and \(N_1=|\mathcal V|\), respectively.
They define empirical measures
\[
\widehat P^0=\frac1{N_0}\sum_{i=1}^{N_0}\delta_{u_i},
\qquad
\widehat Q^0=\frac1{N_1}\sum_{j=1}^{N_1}\delta_{v_j},
\]
where \(\delta_x(A)=\mathbf 1_{\{x\in A\}}\) for every measurable
set \(A\subseteq\mathbb R^d\), and \(x\) is an observed sample \(u_i\) or \(v_j\). For the theoretical plug-in analysis, we assume \(P_\infty=\widehat P^0\) and \(P_1=\widehat Q^0\), so observations follow the fixed empirical distributions. For probability
distributions \(P\) and \(Q\) with densities \(p\) and \(q\), we use the
standard definition
\[
\operatorname{KL}(P\|Q)
=\mathbb E_{X\sim P}\!\left[\log\frac{p(X)}{q(X)}\right].
\]
KL divergence is infinite if \(P\) assigns positive probability
to a set that \(Q\) assigns zero probability. Independent empirical measures
from continuous distributions almost surely have disjoint supports, making
both directed KL divergences infinite even without an underlying change.

To give the two probability distributions positive common support, fix a base scale
\(\varepsilon>0\) and add independent Gaussian noise:
\begin{align*}
\widehat P^\varepsilon
&=\widehat P^0*\mathcal N(0,2\varepsilon I_d),&
\widehat Q^\varepsilon
&=\widehat Q^0*\mathcal N(0,2\varepsilon I_d),
\end{align*}
where \(*\) denotes convolution and \(I_d\) is the \(d\times d\) identity.
Each convolution is a finite Gaussian mixture and therefore has a
positive smooth density on \(\mathbb R^d\); its Gaussian tails also make both
directed KL divergences finite.

For an additional diffusion time \(t\geq0\), convolve both
probability distributions with
\(\mathcal N(0,2tI_d)\). Gaussian covariances add, giving the total scale
\(h_t=\varepsilon+t\) and hence
\begin{equation}
\begin{aligned}
\widehat P_{\infty,t}^{\varepsilon}
&=\widehat P^\varepsilon*\mathcal N(0,2tI_d)
=\widehat P^0*\mathcal N(0,2h_tI_d),\\
\widehat P_{1,t}^{\varepsilon}
&=\widehat Q^\varepsilon*\mathcal N(0,2tI_d)
=\widehat Q^0*\mathcal N(0,2h_tI_d).
\end{aligned}
\label{eq:diffused-empirical-distributions}
\end{equation}
Both \(\varepsilon\) and \(t\) are Gaussian smoothing parameters. The fixed \(\varepsilon>0\) makes the empirical
models smooth and their KL divergences finite. Varying \(t\geq0\) then lets
us compare them across smoothing scales. Using \(\varepsilon\) alone would
give only one scale; integrating over \(t\) connects the score evidence to
KL divergence. Appendix~\ref{app:parameter-roles} gives further details.

Because the Gaussian kernel is positive and infinitely differentiable, each
density \(\widehat p_{r,t}^{\varepsilon}\),
where \(r=\infty\) denotes the pre-change model and \(r=1\)
the post-change model, in
\eqref{eq:diffused-empirical-distributions} is positive and smooth. Moreover,
the Gaussian kernel, and hence each finite mixture, satisfies the classical
heat equation
\[
\partial_t\widehat p_{r,t}^{\varepsilon}(y)
=\Delta_y\widehat p_{r,t}^{\varepsilon}(y),
\qquad
\Delta_y=\sum_{m=1}^d\frac{\partial^2}{\partial y_m^2}.
\]
The expression \(\partial^2/\partial y_m^2\) is a differentiation
operator: applied to a function, it gives its second derivative in coordinate
\(y_m\). The Laplacian \(\Delta_y\) sums these derivatives across coordinates and
measures the local spatial curvature of the density
with respect to the observation \(y\); the equation states that increasing
diffusion time smooths the density according to that curvature
\cite{vincent2011denoising,ho2020ddpm,karras2022elucidating}. Because both
empirical models follow this same evolution, their fixed-time Fisher
divergences can later be integrated into a KL divergence. Appendix~
\ref{app:proof-heat-kernel} verifies the heat equation directly.
We next compare the two Hyv\"arinen scores at each smoothing scale
to obtain evidence of a change; integrating that evidence across scales
will give the KL-based mean increments.

\subsection{Closed-Form Score from Samples}
\label{sec:closed-form-score-evidence}

Section~\ref{sec:score-evidence} provides two positive, smooth reference
densities at every diffusion time \(t\):
\(\widehat p_{\infty,t}^{\varepsilon}\) for the pre-change regime and
\(\widehat p_{1,t}^{\varepsilon}\) for the post-change regime.  We now use the difference between their Hyv\"arinen scores as evidence for sequential change detection. At each fixed \(t\), the evidence is computable from the sample sets and has a negative pre-change mean and a positive post-change mean when the two smoothed empirical distributions differ.

For a positive twice differentiable density \(p\), let
\(s_p(y)=\nabla\log p(y)\) denote its score and define the Hyv\"arinen score
\[
S_H(y;p)
=\frac12\lVert s_p(y)\rVert^2+\Delta\log p(y).
\]
Both terms depend only on derivatives of \(\log p\). Consequently,
\(S_H(y;p)\) is unchanged if \(p\) is multiplied by an unknown positive
normalizing constant, which makes it suitable for comparing unnormalized
models \cite{hyvarinen2005score,wu2023scusum}.

The sign of the resulting comparison follows from the Hyv\"arinen
score-matching identity. For positive normalized densities \(p\) and \(q\),
let \(P\) and \(Q\) denote their corresponding probability distributions.
The Fisher divergence from \(P\) to \(Q\) is
\begin{equation}
D_F(P\|Q)
=\mathbb E_{Y\sim P}\!\left[\lVert s_p(Y)-s_q(Y)\rVert^2\right],
\label{eq:fisher-divergence}
\end{equation}
where \(\mathbb E_{Y\sim P}\) denotes expectation when \(Y\) is drawn from
\(P\). Under the usual integration-by-parts conditions, the Hyv\"arinen
score-matching identity states that
\begin{equation}
\mathbb E_{Y\sim P}\!\left[S_H(Y;q)-S_H(Y;p)\right]
=\frac12D_F(P\|Q)\geq0.
\label{eq:hyvarinen-score-identity}
\end{equation}
Appendix~\ref{app:proof-hyvarinen-score-identity} gives a short derivation.

We therefore define the fixed-time evidence as
\begin{equation}
\Delta_t(Y)
=
S_H(Y;\widehat p_{\infty,t}^{\varepsilon})
-
S_H(Y;\widehat p_{1,t}^{\varepsilon}).
\label{eq:fixed-time-evidence}
\end{equation}
The order of subtraction is important. If
\(Y\sim\widehat P_{\infty,t}^{\varepsilon}\), then
\eqref{eq:hyvarinen-score-identity} gives
\[
\mathbb E_{\infty}[\Delta_t(Y)]
=-\frac12D_F\!\left(
\widehat P_{\infty,t}^{\varepsilon}
\middle\|
\widehat P_{1,t}^{\varepsilon}
\right)\leq0.
\]
If \(Y\sim\widehat P_{1,t}^{\varepsilon}\), exchanging the two models gives
\[
\mathbb E_{1}[\Delta_t(Y)]
=\frac12D_F\!\left(
\widehat P_{1,t}^{\varepsilon}
\middle\|
\widehat P_{\infty,t}^{\varepsilon}
\right)\geq0.
\]
Thus \(\Delta_t\) has a nonpositive mean before the change and a
nonnegative mean afterward, with strict signs when the two smoothed
distributions differ. These signs allow CUSUM to accumulate evidence
of a change.

Because the smoothed densities are finite Gaussian mixtures,
their Hyv\"arinen scores can be computed exactly from weighted moments
of the pre- and post-change samples, without training a score network. For either reference set \(\mathcal R_r\), put \(h=\varepsilon+t\) and
\[
\alpha_j(y)=
\frac{e^{-\|y-r_j\|^2/(4h)}}{\sum_\ell e^{-\|y-r_\ell\|^2/(4h)}},
\quad
k(y)=\sum_j\alpha_j(y)r_j,
\]
and \(V(y)=\sum_j\alpha_j(y)\|r_j-k(y)\|^2\). The exact formula used for
the score comparisons is
\[
S_H(y;\widehat p_{r,t}^{\varepsilon})
=\frac{\|k(y)-y\|^2+2V(y)}{8h^2}-\frac{d}{2h}.
\]
Appendix~\ref{app:proof-closed-form-multiscale-evidence} derives these
formulas. We next combine the resulting score differences across diffusion
times to construct the sequential stopping rule.
\subsection{The \methodname{} Stopping Rule}
\label{sec:detector-construction}

To combine the evidence \(\Delta_t\) across diffusion times without
evaluating every scale, DI-SCUSUM samples one time for each observation.
Importance weighting then gives an unbiased estimate of the time integral
of the noise-averaged score evidence.
Choose a density \(f\) on \([0,\infty)\) with \(f(t)>0\)
for all \(t\geq0\), so every positive-time interval has nonzero sampling probability. For each observation \(X_n\), draw \(t_n\sim f\) and \(W_n\sim\mathcal N(0,I_d)\) independently of each other, the data stream, and all previous auxiliary draws, and define
\begingroup
\setlength{\abovedisplayskip}{4pt}
\setlength{\belowdisplayskip}{4pt}
\begin{equation}\label{eq:online-perturbation}
Y_n=X_n+\sqrt{2(\varepsilon+t_n)}\,W_n.
\end{equation}
\endgroup
The variance in this perturbation is chosen to match the same effective scale
\(h_{t_n}=\varepsilon+t_n\) in
\eqref{eq:diffused-empirical-distributions}.
For the empirical plug-in analysis, we set
\(P_\infty=\widehat P^0\) and \(P_1=\widehat Q^0\).
Conditional on \(t_n=t\), \eqref{eq:online-perturbation} therefore gives
\(Y_n\sim\widehat P_{\infty,t}^{\varepsilon}\) before the change and
\(Y_n\sim\widehat P_{1,t}^{\varepsilon}\) afterward. The fixed-time drift
identities then apply to \((t_n,Y_n)\).

We now turn the single sampled diffusion time into an estimate of the full
time integral. For a multiplier \(\lambda>0\), define
\begingroup
\setlength{\abovedisplayskip}{4pt}
\setlength{\belowdisplayskip}{4pt}
\begin{equation}
z_n^{\methodtag}
=\frac{2\lambda}{f(t_n)}\Delta_{t_n}(Y_n),
\label{eq:cf-increment}
\end{equation}
where
\[
\Delta_{t_n}(Y_n)
=S_H(Y_n;\widehat p_{\infty,t_n}^{\,\varepsilon})
-S_H(Y_n;\widehat p_{1,t_n}^{\,\varepsilon}).
\]
\endgroup
The factor \(1/f(t_n)\) gives more weight to diffusion times that
are sampled less often, so the average increment accounts for the full
time integral rather than favoring frequently sampled scales. Assume \(\int_0^\infty\mathbb E_r^{X,W}|G_t(X,W)|\,dt<\infty\)
in both regimes, with \(G_t\) defined in Section~\ref{sec:implementation}. 
Changing \(f\) preserves the mean but affects increment variability and
false-alarm calibration. The multiplier \(\lambda>0\)
sets the overall size of the increments.

DI-SCUSUM initializes \(Z_0=0\) and updates the reflected CUSUM
statistic as
\begingroup
\setlength{\abovedisplayskip}{4pt}
\setlength{\belowdisplayskip}{4pt}
\begin{equation}
Z_n=\max\{0,Z_{n-1}+z_n^{\methodtag}\}.
\label{eq:cusum-update}
\end{equation}
\endgroup The procedure raises an alarm at
\begingroup
\setlength{\abovedisplayskip}{4pt}
\setlength{\belowdisplayskip}{4pt}
\begin{equation}
T_{\methodtag}
=\inf\{n\geq1:Z_n\geq\tau\},
\label{eq:cf-recursion}
\end{equation}
\endgroup
where \(\tau>0\) is the alarm threshold. The maximum resets the statistic to zero whenever the accumulated evidence becomes negative; positive increments can raise the statistic to \(\tau\). Larger thresholds reduce false alarms but increase detection delay.

Algorithm~\ref{alg:cf-discusum} summarizes the online updates.
Appendix~\ref{app:proof-closed-form-multiscale-evidence} gives the
closed-form scores.

\begin{algorithm}[t]
\caption{\methodname{} stopping rule}
\label{alg:cf-discusum}
\begin{algorithmic}[1]
\Require Reference sets \(\mathcal U,\mathcal V\); base scale
\(\varepsilon>0\); time density \(f\); multiplier \(\lambda>0\);
threshold \(\tau>0\)
\State Initialize \(Z_0\gets0\)
\For{\(n=1,2,\ldots\)}
  \State Observe \(X_n\); sample \(t_n\sim f\) and
  \(W_n\sim\mathcal N(0,I_d)\)
  \State Set \(Y_n\gets X_n+\sqrt{2(\varepsilon+t_n)}W_n\)
  \State Evaluate the score difference \(\Delta_{t_n}(Y_n)\)
  \State \(z_n^{\methodtag}\gets
    \dfrac{2\lambda}{f(t_n)}\Delta_{t_n}(Y_n)\)
    \hfill\text{as in }\eqref{eq:cf-increment}
  \State \(Z_n\gets\max\{0,Z_{n-1}+z_n^{\methodtag}\}\)
    \hfill\text{as in }\eqref{eq:cusum-update}
  \If{\(Z_n\geq\tau\)} \State \Return \(T_{\methodtag}=n\) \EndIf
\EndFor
\end{algorithmic}
\end{algorithm}

\subsection{Sequential Guarantees}
\label{sec:sequential-guarantees}

The relative de Bruijn identity below relates the integrated score evidence to KL divergence along the common Gaussian heat flow
\cite{stam1959information,valerotoranzo2018debruijn}.
\begin{lemma}[Relative de Bruijn identity along a common heat flow]
\label{lem:heat-flow-kl}
Let \(P_t,Q_t\) have positive smooth densities satisfying
\(\partial_t p_t=\Delta p_t\) and
\(\partial_t q_t=\Delta q_t\). Under conditions that justify
differentiation and integration by parts,
\begin{equation}
\frac{d}{dt}\operatorname{KL}(P_t\|Q_t)
=-D_F(P_t\|Q_t).
\label{eq:heat-flow-kl-dissipation}
\end{equation}
Consequently, for \(0\leq a<b\),
\[
\int_a^bD_F(P_t\|Q_t)\,dt
=\operatorname{KL}(P_a\|Q_a)-\operatorname{KL}(P_b\|Q_b).
\]
If the terminal KL divergence tends to zero, the integral over
\((0,\infty)\) equals \(\operatorname{KL}(P_0\|Q_0)\).
\end{lemma}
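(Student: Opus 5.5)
The plan is to differentiate \(\operatorname{KL}(P_t\|Q_t)=\int p_t\log(p_t/q_t)\,dy\) under the integral sign, substitute the heat equation, and integrate by parts. Differentiating the integrand gives
\[
\partial_t\Bigl(p_t\log\tfrac{p_t}{q_t}\Bigr)
=\partial_tp_t\,\log\tfrac{p_t}{q_t}+\partial_tp_t-\frac{p_t}{q_t}\partial_tq_t .
\]
The middle term integrates to \(\frac{d}{dt}\int p_t=0\), since the heat flow conserves mass. Substituting \(\partial_tp_t=\Delta p_t\) and \(\partial_tq_t=\Delta q_t\), it remains to evaluate
\[
\int\Delta p_t\,\log\tfrac{p_t}{q_t}\,dy-\int\frac{p_t}{q_t}\Delta q_t\,dy .
\]

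The second step is one integration by parts in each term, with boundary terms discarded under the stated conditions. For the first term,
\[
\int\Delta p_t\log\tfrac{p_t}{q_t}\,dy=-\int\nabla p_t\cdot(s_{p_t}-s_{q_t})\,dy=-\int p_t\,s_{p_t}\cdot(s_{p_t}-s_{q_t})\,dy .
\]
For the second term, using \(\nabla(p_t/q_t)=(p_t/q_t)(s_{p_t}-s_{q_t})\) and \(\nabla q_t=q_ts_{q_t}\),
\[
-\int\frac{p_t}{q_t}\Delta q_t\,dy=\int\nabla\!\Bigl(\frac{p_t}{q_t}\Bigr)\cdot\nabla q_t\,dy=\int p_t\,(s_{p_t}-s_{q_t})\cdot s_{q_t}\,dy .
\]
Adding the two gives \(-\int p_t\lVert s_{p_t}-s_{q_t}\rVert^2\,dy=-D_F(P_t\|Q_t)\), which is \eqref{eq:heat-flow-kl-dissipation}.

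The integrated form then follows from the fundamental theorem of calculus on \([a,b]\), which needs \(t\mapsto\operatorname{KL}(P_t\|Q_t)\) to be continuously differentiable there. For the infinite-horizon statement, let \(b\to\infty\). The integrand \(D_F\) is nonnegative, so by monotone convergence \(\int_0^\infty D_F\,dt=\operatorname{KL}(P_0\|Q_0)-\lim_{b\to\infty}\operatorname{KL}(P_b\|Q_b)\), and this equals \(\operatorname{KL}(P_0\|Q_0)\) when the terminal divergence vanishes.

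The main obstacle is justifying the interchange of \(d/dt\) with the integral and the vanishing of the boundary terms. The lemma delegates this to its hypotheses, but for the paper's later application I would verify it for the relevant case. There both densities are finite Gaussian mixtures at scale \(h_t=\varepsilon+t\ge\varepsilon>0\), and the closed-form score formula of Section~\ref{sec:closed-form-score-evidence} gives:
\begin{itemize}
\item \(\log(p_t/q_t)\) and its gradient grow at most polynomially in \(\lVert y\rVert\), locally uniformly in \(t\);
\item \(p_t\), \(\nabla p_t\), \(\Delta p_t\) and \(\partial_tp_t\) decay like Gaussians.
\end{itemize}
Dominated convergence then justifies the differentiation, and the surface integrals over large spheres vanish. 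As a supporting fact for the terminal limit, \(\operatorname{KL}(P_b\|Q_b)\to0\) as \(b\to\infty\) for such mixtures, since the component means become negligible relative to the growing common variance.
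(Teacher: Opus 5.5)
Your proposal is correct and follows the paper's proof essentially step for step. You differentiate under the integral, drop the mass-conservation term, substitute the heat equations, and integrate by parts in both terms. The one difference is cosmetic: you integrate the second term by parts directly against $\nabla(p_t/q_t)$, whereas the paper first rewrites $\Delta q_t/q_t=\Delta\log q_t+\lVert\nabla\log q_t\rVert^2$. Your monotone-convergence argument for $b\to\infty$ and your regularity check for the Gaussian-mixture case are sound additions that the paper leaves implicit.
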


Appendix~\ref{app:proof-heat-flow-kl-dissipation} proves the lemma.
For distinct Gaussian-smoothed empirical models, the terminal KL divergence vanishes, so
the time integral recovers the initial KL divergence. Let
\(\mathbb E_r^X\) denote expectation over an observation from regime
\(r\in\{\infty,1\}\), and let \(\mathbb E^{t,W}\) denote expectation over the
independent auxiliary draws \((t_n,W_n)\). Combining
\eqref{eq:hyvarinen-score-identity}, \eqref{eq:cf-increment}, and
\eqref{eq:heat-flow-kl-dissipation} gives
\begin{equation}
\vcenter{\hbox{\resizebox{0.86\columnwidth}{!}{$
\begin{aligned}
\mu_\infty
&\defeq \mathbb E_\infty^X\!\left[
\mathbb E^{t,W}[z_n^{\methodtag}\mid X_n]\right]
=-\lambda\operatorname{KL}
(\widehat P^\varepsilon\|\widehat Q^\varepsilon)<0,\\
\mu_1
&\defeq \mathbb E_1^X\!\left[
\mathbb E^{t,W}[z_n^{\methodtag}\mid X_n]\right]
=\lambda\operatorname{KL}
(\widehat Q^\varepsilon\|\widehat P^\varepsilon)>0.
\end{aligned}
$}}}
\label{eq:kl-drifts}
\end{equation}
Appendix~\ref{app:proof-kl-drift-identities} gives the conditioning argument.

False-alarm control requires more than a negative mean increment.
The condition \(\mathbb E_\infty[e^{z_n^{\methodtag}}]\leq1\) ensures that
the exponential of the cumulative, unreflected evidence does not increase
in conditional expectation under no change, allowing threshold-crossing
probabilities to be bounded. The next lemma gives conditions on the
increment distribution under which \(\lambda\) can enforce this inequality
or normalize the expectation to one.

\begin{lemma}[Admissible Scaling and Cram\'er Root]
\label{lem:admissible-multiplier}
\begingroup
Fix $\varepsilon$, $f$, and the reference sets. Let
$U_n=2\Delta_{t_n}(Y_n)/f(t_n)$, so $z_n^{\methodtag}=\lambda U_n$,
and write $M(s)=\mathbb E_\infty[e^{sU_n}]$.
Suppose $\mathbb E_\infty|U_n|<\infty$,
$\widehat P^\varepsilon\ne\widehat Q^\varepsilon$, and the
KL drift identity~\eqref{eq:kl-drifts} holds.
\begin{enumerate}
\item If $M(\eta)<\infty$ for some $\eta>0$, then $M(\lambda)<1$
for all sufficiently small $\lambda>0$.
\item If $1<M(b)<\infty$ for some $b>0$, there is a unique
$\lambda_\star\in(0,b)$ with $M(\lambda_\star)=1$, and
\[
M(\lambda)\leq1
\quad\Longleftrightarrow\quad
0<\lambda\leq\lambda_\star
\qquad(\lambda>0).
\]
For every $\lambda>0$, the increment $\lambda U_n$ has positive
Cram\'er root
$\theta_{\lambda,\varepsilon}=\lambda_\star/\lambda$,
with a finite moment-generating function in a neighborhood of this root.
\end{enumerate}
\endgroup
\end{lemma}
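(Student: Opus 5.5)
The plan is to work entirely with the moment-generating function $M(s)=\mathbb E_\infty[e^{sU_n}]$ on $[0,\infty)$. I will use three standard properties: convexity, lower semicontinuity, and differentiability on the interior of its finiteness domain. Throughout, the KL drift identity~\eqref{eq:kl-drifts} with $\lambda=1$ gives
\[
\mathbb E_\infty[U_n]=-\operatorname{KL}(\widehat P^\varepsilon\|\widehat Q^\varepsilon)<0,
\]
and the strict sign comes from $\widehat P^\varepsilon\neq\widehat Q^\varepsilon$.

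\emph{Part 1.} If $M(\eta)<\infty$, then $M$ is finite on $[0,\eta]$, since $e^{sU}\le 1+e^{\eta U}$ for $0\le s\le\eta$. For $0<s<\eta$, the difference quotient $(e^{sU}-1)/s$ is dominated by $|U|e^{\eta U}+|U|$. This bound is integrable: on $\{U\le0\}$ use $\mathbb E|U|<\infty$, and on $\{U>0\}$ use $U e^{sU}\le C_\delta e^{\eta U}$ for $s\le\eta-\delta$. Dominated convergence then gives the right derivative
\[
M'(0^+)=\mathbb E_\infty[U_n]<0 .
\]
Since $M(0)=1$, it follows that $M(\lambda)<1$ for all sufficiently small $\lambda>0$. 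A cleaner route is to note that $(e^{sU}-1)/s$ is monotone in $s$, by convexity of $s\mapsto e^{sU}$, and apply monotone convergence as $s\downarrow0$. I expect this justification of $M'(0^+)$ using only $\mathbb E|U_n|<\infty$ to be the main technical point.

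\emph{Part 2.} Here $M$ is finite and convex on $[0,b]$, and it is continuous on $[0,b]$. On the open interval this is by convexity. At $b$ it follows by dominated convergence, using $e^{sU}\le 1+e^{bU}$. Part 1 applies with $\eta=b$, so $M<1$ near $0^+$. Since $M(b)>1$, the intermediate value theorem gives a root $\lambda_\star\in(0,b)$ of $M=1$.

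For uniqueness, suppose there were two roots $0<\lambda_1<\lambda_2$. Convexity together with $M(0)=1$ would force $M\equiv1$ on $[0,\lambda_2]$, contradicting $M<1$ near $0^+$.

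For the equivalence, first take $\lambda>\lambda_\star$. If $\lambda\le b$, convexity on $[0,\lambda]$ with $M(0)=M(\lambda_\star)=1$ and $M<1$ on $(0,\lambda_\star)$ forces $M(\lambda)>1$; otherwise the chord from $\lambda_\star$ would sit at or below level one. If $\lambda>b$, then $M(\lambda)\ge1$ whenever it is finite, by the same convexity argument, and $M(\lambda)=\infty>1$ is allowed; the equality case is excluded by uniqueness extended to the whole finiteness domain. For $0<\lambda<\lambda_\star$, convexity between $0$ and $\lambda_\star$ gives $M(\lambda)\le1$, and in fact strict inequality by the same argument.

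\emph{Cram\'er root.} The increment $\lambda U_n$ has MGF $s\mapsto M(s\lambda)$. Its positive Cram\'er root $\theta$ solves $M(\theta\lambda)=1$, so uniqueness gives $\theta\lambda=\lambda_\star$, that is, $\theta_{\lambda,\varepsilon}=\lambda_\star/\lambda$. Finiteness of the MGF in a neighborhood of the root holds because $\lambda_\star$ lies in the open interval $(0,b)$, where $M$ is finite. The corresponding neighborhood of $\theta$ is therefore $\{s:\ s\lambda\in(0,b)\}$, which is open and contains $\lambda_\star/\lambda$.
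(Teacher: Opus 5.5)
Your proposal is correct and follows essentially the paper's argument: the right derivative $M'(0^+)=\mathbb E_\infty U_n=-\operatorname{KL}(\widehat P^\varepsilon\|\widehat Q^\varepsilon)<0$ by dominated convergence, then continuity and the intermediate value theorem for $\lambda_\star$, convexity for uniqueness and the sign pattern, and $s\mapsto M(s\lambda)$ for the Cram\'er root. In Part 1, the first dominating function you write, $|U|e^{\eta U}$, need not be integrable, but the bound $C_\delta e^{\eta U}$ on $\{U>0\}$ for $s\le\eta-\delta$ that you actually use is integrable and matches the paper's choice $\delta=\eta/2$; the one real difference is that you get uniqueness from ordinary convexity plus $M<1$ near $0^+$, whereas the paper uses strict convexity via $\mathbb P_\infty(U_n\neq0)>0$, and both work.
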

Thus $\lambda=\lambda_\star$ gives unit Cram\'er root, whereas a
smaller admissible multiplier gives a strict exponential-moment inequality.
The moment conditions concern the full importance-weighted increment;
negative drift alone does not imply them. Appendix~\ref{app:proof-admissible-multiplier}
gives the proof.

The multiplier rescales the evidence rather than changing the
observations used by the procedure. For the same observations and auxiliary
draws, multiplying every increment by \(\lambda>0\) also multiplies the
reflected statistic by \(\lambda\). Thus a threshold \(\tau\) with multiplier
\(\lambda\) is equivalent to a threshold \(\tau/\lambda\) with multiplier one:
\(T^{(\lambda)}(\tau)=T^{(1)}(\tau/\lambda)\).
If only the multiplier changes from \(\lambda\) to \(\lambda'\), rescaling the threshold from \(\tau\) to \((\lambda'/\lambda)\tau\) preserves every alarm time and requires no new threshold simulations.

For the sequential analysis, assume the increments are i.i.d. and
non-arithmetic. Before the change, assume their moment-generating function is
finite near the positive Cram\'er root \(\theta_{\lambda,\varepsilon}\)
defined by
\(
\mathbb E_\infty^{X,t,W}
[e^{\theta_{\lambda,\varepsilon}z_n^{\methodtag}}]=1
\).
After the change, assume a finite second moment, a finite mean overshoot bound
\(C_{\mathrm{os}}\), and a finite expected downward excursion,
\[
C_{\min}
=\mathbb E_1\!\left[-\inf_{m\geq0}\sum_{k=1}^m
z_k^{\methodtag}\right]<\infty.
\] 
The constant \(C_{\mathrm{os}}\) bounds, uniformly over \(\tau>0\), the expected amount by which the unreflected post-change random walk exceeds \(\tau\) at its first crossing. Both \(C_{\mathrm{os}}\) and \(C_{\min}\) are independent of \(\tau\).
These are standard renewal conditions for
CUSUM-type procedures
\cite{lorden1971procedures,moustakides1986optimal,wu2023scusum}.

Under these assumptions, the increment distribution determines
how the threshold controls false alarms and detection delay. The following
theorem gives the pre-change average run length.
\begin{theorem}[False-alarm scaling]
\label{thm:false-alarm}
Under the pre-change conditions above, as \(\tau\to\infty\),
\begin{equation}
\log\operatorname{ARL}(T_{\methodtag})
=\log\mathbb E_\infty[T_{\methodtag}]
=\theta_{\lambda,\varepsilon}\tau+O(1).
\label{eq:arl-scaling}
\end{equation}
Consequently, a target \(\operatorname{ARL}\approx\gamma\) has the
leading-order calibration
\(\tau\approx\log(\gamma)/\theta_{\lambda,\varepsilon}\).
\end{theorem}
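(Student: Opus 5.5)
Under no change the increments $z_n^{\methodtag}$ are i.i.d. with negative mean $\mu_\infty=-\lambda\operatorname{KL}(\widehat P^\varepsilon\|\widehat Q^\varepsilon)<0$ by \eqref{eq:kl-drifts}. The plan is to view $T_{\methodtag}$ as a reflected random walk and reduce it to repeated one-sided excursions of the unreflected walk $S_m=\sum_{k\le m}z_k^{\methodtag}$. Write $\theta=\theta_{\lambda,\varepsilon}$ and $N$ for the number of excursions. A standard regeneration argument (Lorden/Siegmund) expresses the CUSUM run length through the stopping time
\[
\sigma=\inf\{m\ge1:S_m<0\ \text{or}\ S_m\ge\tau\}.
\]
Each excursion from $0$ is a copy of $\sigma$. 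Whenever $S_\sigma<0$ the statistic resets to $0$, which renews the walk. Hence the number of excursions until alarm is geometric with success probability $p_\tau=\mathbb P_\infty(S_\sigma\ge\tau)$, and Wald's identity gives
\[
\mathbb E_\infty[T_{\methodtag}]=\frac{\mathbb E_\infty[\sigma]}{p_\tau}.
\]
Strictly, a reset occurs at $Z=0$ only after a nonpositive sum. I would handle this by using $S_m\le 0$ in the definition of $\sigma$; the ladder structure is otherwise unchanged.

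The numerator is bounded above and below by constants independent of $\tau$. It is at least $1$. It is at most $\mathbb E_\infty[\inf\{m:S_m\le 0\}]$, which is finite because the drift is negative and the first moment is finite (Wald applied to the descending ladder epoch). It therefore suffices to show $\log p_\tau=-\theta\tau+O(1)$.

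For $p_\tau$ I would use the exponential martingale $e^{\theta S_m}$, which has mean one by the definition of the Cram\'er root. Optional stopping at $\sigma$ is justified by truncation and $S_{\sigma\wedge m}\le\tau+$overshoot, and gives
\[
1=\mathbb E_\infty\!\left[e^{\theta S_\sigma}\right]
= p_\tau\,\mathbb E_\infty\!\left[e^{\theta S_\sigma}\mid S_\sigma\ge\tau\right]
+\mathbb E_\infty\!\left[e^{\theta S_\sigma};S_\sigma\le0\right].
\]
\emph{Upper bound.} Since $e^{\theta S_\sigma}\ge e^{\theta\tau}$ on the crossing event, $p_\tau\le e^{-\theta\tau}$.

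\emph{Lower bound.} The second term is at most $\mathbb P(S_\sigma\le0)\le 1-c$ for some $c>0$, uniformly in $\tau\ge\tau_0$. The reason is that $\mathbb P(z_1^{\methodtag}>0)>0$, and a first positive step has probability bounded below of reaching $\tau$ before returning below $0$. Concretely, $\mathbb P(\sup_m S_m>0)>0$ and the conditional overshoot analysis below supply this; more cleanly, the second term equals $1-\mathbb P(S_\sigma\ge\tau)\cdot(\cdot)$, which tends to $\mathbb E[e^{\theta S_{\sigma_-}}]<1$. It remains to show $\mathbb E[e^{\theta(S_\sigma-\tau)}\mid S_\sigma\ge\tau]$ is bounded uniformly in $\tau$. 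This yields $p_\tau\ge c\,e^{-\theta\tau}/C$.

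The main obstacle is this uniform bound on the exponential overshoot under the tilted measure. I would change measure to $d\widetilde{\mathbb P}=e^{\theta z}\,d\mathbb P_\infty$. The tilted increments have positive mean $\mathbb E_\infty[z e^{\theta z}]$, which is finite because the moment-generating function is finite in a neighborhood of $\theta$. Under $\widetilde{\mathbb P}$ the overshoot $R_\tau=S_{\sigma}-\tau$ has finite mean, and renewal theory for non-arithmetic walks (Cram\'er--Lundberg, as in Siegmund's or Woodroofe's nonlinear renewal theory) shows it converges in distribution with $\sup_\tau\widetilde{\mathbb E}[R_\tau]<\infty$. The needed quantity is
\[
\mathbb E_\infty\!\left[e^{\theta(S_\sigma-\tau)}\mid S_\sigma\ge\tau\right],
\]
and I would rewrite it through the identity $p_\tau=e^{-\theta\tau}\,\widetilde{\mathbb E}\!\left[e^{-\theta R_\tau};S_\sigma\ge\tau\right]$. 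This gives directly
\[
e^{-\theta\tau}\,\widetilde{\mathbb P}(S_\sigma\ge\tau)\,e^{-\theta\widetilde{\mathbb E}[R_\tau\mid\cdot]}\le p_\tau\le e^{-\theta\tau},
\]
using Jensen for the lower bound. Under $\widetilde{\mathbb P}$ the drift is positive, so $\widetilde{\mathbb P}(S_\sigma\ge\tau)\ge\widetilde{\mathbb P}(S_m>0\ \forall m)>0$, and this bound is uniform in $\tau$. Combining the two estimates, $\log\mathbb E_\infty[T_{\methodtag}]=\theta\tau+O(1)$. Solving $\theta\tau=\log\gamma$ then gives the stated calibration $\tau\approx\log(\gamma)/\theta_{\lambda,\varepsilon}$.
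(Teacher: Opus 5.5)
Your proposal is correct and follows the same route as the paper's proof: regeneration at returns to zero gives $\mathbb E_\infty[T_{\mathrm{DI}}]=\mathbb E_\infty[\sigma]/p_\tau$ with $\mathbb E_\infty[\sigma]$ bounded independently of $\tau$, and a Cram\'er--Lundberg-type estimate gives $p_\tau\asymp e^{-\theta\tau}$; the difference is that you prove this estimate, by optional stopping and the tilted-measure identity, where the paper simply cites it. Discard your first lower-bound attempt, since $\mathbb P_\infty(S_\sigma\le0)=1-p_\tau\to1$ admits no uniform bound $1-c$; the change-of-measure argument suffices once you note that $\widetilde{\mathbb E}[R_\tau\mid S_\sigma\ge\tau]\le\sup_{\tau'}\widetilde{\mathbb E}[S_{H_{\tau'}}-\tau']/\widetilde{\mathbb P}(S_m>0\ \forall m\ge1)$, where $H_{\tau'}=\inf\{m:S_m\ge\tau'\}$, and this is finite by Lorden's overshoot bound because the moment-generating function is finite near $\theta$, which gives finite tilted second moments.
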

A proof is given in Appendix~\ref{app:proof-average-run-length-scaling}.

The false-alarm theorem calibrates \(\tau\), but it does not state
how quickly a change is detected. After the change, the positive mean
\(\mu_1=\lambda\operatorname{KL}(\widehat Q^\varepsilon\|
\widehat P^\varepsilon)\) from \eqref{eq:kl-drifts} drives the statistic
toward the calibrated threshold. The next theorem bounds the resulting detection delay.
\begin{theorem}[Conditional detection delay]
\label{thm:detection-delay}
Under the post-change conditions above,
\begin{equation}
\vcenter{\hbox{\resizebox{0.86\columnwidth}{!}{$
\frac{\tau-C_{\min}}{\mu_1}
\leq
\operatorname{CADD}(T_{\methodtag})
=\mathbb E_1[T_{\methodtag}]
\leq
\frac{\tau+C_{\mathrm{os}}}{\mu_1}
$}}}
\label{eq:delay-bounds}
\end{equation}
\begin{equation}
\operatorname{CADD}(T_{\methodtag})
=
\frac{\tau}
{\lambda\operatorname{KL}
(\widehat Q^\varepsilon\|\widehat P^\varepsilon)}
+O(1).
\label{eq:first-order-delay}
\end{equation}
\end{theorem}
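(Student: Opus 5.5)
The plan has three steps: reduce the worst case over change points to $\nu=1$, bound $\mathbb E_1[T_{\methodtag}]$ between two Wald-identity estimates, and substitute the KL drift identity~\eqref{eq:kl-drifts}.

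\textbf{Step 1: reduction to $\nu=1$.} The increments $z_n^{\methodtag}$ are i.i.d. within each regime, because the auxiliary draws $(t_n,W_n)$ are independent of the past. On the event $\{T_{\methodtag}\geq\nu\}$, the statistic $Z_{\nu-1}\geq0$ is $\mathcal F_{\nu-1}$-measurable, and from time $\nu$ onward the recursion runs on fresh post-change increments. The reflected statistic started from $Z_{\nu-1}=z\geq0$ dominates the one started from $0$. Its hitting time of $\tau$ is therefore no larger. Hence $\mathbb E_\nu[T-\nu+1\mid T\geq\nu]\leq\mathbb E_1[T]$. Since $\nu=1$ is included in the supremum, $\operatorname{CADD}(T_{\methodtag})=\mathbb E_1[T_{\methodtag}]$.

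\textbf{Step 2: upper bound.} Let $S_n=\sum_{k\leq n}z_k^{\methodtag}$ and $\sigma_\tau=\inf\{n:S_n\geq\tau\}$ be the first crossing of the unreflected walk. Because $Z_n\geq S_n$, we have $T_{\methodtag}\leq\sigma_\tau$. The drift is positive, $\mu_1>0$ by~\eqref{eq:kl-drifts}, and the second moment is finite, so $\mathbb E_1\sigma_\tau<\infty$ and Wald's identity applies:
\[
\mu_1\mathbb E_1[\sigma_\tau]=\mathbb E_1[S_{\sigma_\tau}]\leq\tau+C_{\mathrm{os}}.
\]
This gives the right-hand inequality in~\eqref{eq:delay-bounds}.

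\textbf{Step 3: lower bound.} Write the reflected statistic as $Z_n=S_n-\min_{0\leq m\leq n}S_m$. Then $\tau\leq Z_T\leq S_T+\bigl(-\inf_{m\geq0}S_m\bigr)$. We have $\mathbb E_1T\leq\mathbb E_1\sigma_\tau<\infty$, so Wald gives $\mathbb E_1[S_T]=\mu_1\mathbb E_1[T]$. Taking expectations then yields
\[
\tau\leq\mu_1\mathbb E_1[T]+C_{\min},
\]
which is the left-hand inequality in~\eqref{eq:delay-bounds}.

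\textbf{Step 4: first-order delay.} Substituting $\mu_1=\lambda\operatorname{KL}(\widehat Q^\varepsilon\|\widehat P^\varepsilon)$ from~\eqref{eq:kl-drifts} into both bounds gives~\eqref{eq:first-order-delay}. The $O(1)$ term lies between $-C_{\min}/\mu_1$ and $C_{\mathrm{os}}/\mu_1$, both independent of $\tau$.

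\textbf{Main obstacle.} I expect the delicate step to be the reduction in Step 1, specifically the claim that the supremum is attained at $\nu=1$. This needs the monotonicity of the reflected recursion in its initial value, together with the fact that conditioning on $\{T\geq\nu\}\in\mathcal F_{\nu-1}$ does not alter the law of the future increments. Both follow from the independence of the auxiliary draws. A secondary point is confirming that $\mathbb E_1T<\infty$ so Wald's identity is legitimate, which the comparison $T_{\methodtag}\leq\sigma_\tau$ supplies.
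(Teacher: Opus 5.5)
Your proposal is correct and follows the paper's proof essentially step for step. The paper uses the same monotone coupling of the reflected recursion to reduce the supremum to $\nu=1$. It then applies Wald's identity at the first passage of the unreflected walk, with the overshoot constant $C_{\mathrm{os}}$, for the upper bound, and Wald's identity at $T_{\methodtag}$ together with $Z_T=S_T-\min_{k\leq T}S_k$ and $C_{\min}$ for the lower bound. Your remark that $\{T\geq\nu\}\in\mathcal F_{\nu-1}$ is independent of the post-change increments makes explicit a point the paper leaves implicit.
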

A proof is given in Appendix~\ref{app:proof-kl-form-first-order-detection-delay}.
At $\nu=1$, $Z_0=0$ and $T$ counts the post-change observations through the alarm; this case attains CADD for the reflected recursion.

The two theorems separate calibration from detection: the pre-change Cram\'er
root governs false alarms, while the forward KL divergence governs the
first-order delay. Combining them at a matched target
\(\operatorname{ARL}\approx\gamma\) gives
\[
\operatorname{CADD}(T_{\methodtag})
=
\frac{\log\gamma}
{\theta_{\lambda,\varepsilon}\lambda
\operatorname{KL}(\widehat Q^\varepsilon\|
\widehat P^\varepsilon)}
+O(1),
\]
which also shows why drift comparisons are meaningful only after false-alarm
calibration.

Finally, the base smoothing parameter \(\varepsilon\) creates a genuine
statistical tradeoff. It is needed to turn the empirical measures into
positive probability distributions with finite KL divergence, but additional smoothing also removes
information. Applying the same heat-flow identity with \(\varepsilon\) as the
smoothing variable gives
\[
\frac{d}{d\varepsilon}
\operatorname{KL}(\widehat Q^\varepsilon\|
\widehat P^\varepsilon)
=-D_F(\widehat Q^\varepsilon\|
\widehat P^\varepsilon)\leq0.
\]
Thus increasing \(\varepsilon\) improves regularity and numerical stability
but can reduce the KL information available for detection. The
drift identities in \eqref{eq:kl-drifts} and the guarantees in
Theorems~\ref{thm:false-alarm} and \ref{thm:detection-delay} are conditional
on the fixed reference sets and the exact empirical plug-in model. For a monitored stream whose distributions differ from the fixed
empirical models, the score increments remain computable, but their actual
means need not equal the KL expressions in \eqref{eq:kl-drifts}; the stated
guarantees therefore require the model assumptions.

The mean increment does not depend on \(f\), but its variance does.
We next choose how to estimate the diffusion integral and calibrate the
alarm threshold.

\subsection{Estimation and Calibration}
\label{sec:implementation}
The sampling density \(f\) can reduce the variability of the
increments by allocating more draws to suitable diffusion times. Define
\[
G_t(X,W)=2\Delta_t\!\left(X+\sqrt{2(\varepsilon+t)}W\right).
\]
The density that minimizes the post-change second moment is
\[
f^\star(t)\propto\sqrt{\mathbb E_1[G_t(X,W)^2]},
\]
provided its normalizing integral is finite
(Proposition~\ref{prop:optimal-time-law}; proof in
Appendix~\ref{app:proof-optimal-time-law}). In practice, the second moment
can be estimated on a pilot grid. Averaging \(M\) independent time--noise
draws for each observation preserves the mean and divides the conditional
sampling variance by \(M\), when finite, at the cost of \(M\) score
comparisons. This reduces fluctuations caused by the auxiliary draws.

Rao--Blackwellization instead removes the noise-sampling variance
by averaging over \(W\) conditional on the observation
\cite{rao1945information,blackwell1947conditional}:
\[
\overline G_t(X)=\mathbb E[G_t(X,W)\mid X].
\] A quadrature rule with times \(t_\ell\) and weights \(w_\ell\)
then approximates the remaining time integral by
\[
\widehat z_n^{\mathrm{RB}}
=\lambda\sum_{\ell=1}^Lw_\ell\overline G_{t_\ell}(X_n).
\]
Monte Carlo is unbiased under integrability; quadrature removes
time-sampling variance but introduces truncation and discretization error. 

For a target ARL \(\gamma\), the theoretical approximation
\(\tau\approx\log(\gamma)/\theta_{\lambda,\varepsilon}\) supplies an initial
threshold. We simulate no-change streams and refine the threshold by
bisection until the empirical ARL matches the target. Calibration is repeated
when \(\varepsilon\), \(f\), or the estimator changes; changing only
\(\lambda\) requires the same proportional change in \(\tau\).
Exact score evaluation costs \(O((N_0+N_1)d)\) per diffusion time. 
Appendix~\ref{app:additional-computation} gives further implementation
and approximation details.

\section{Benefits of Diffusion Integration}
\label{sec:integration-help}
Section~\ref{sec:methodology} established the validity of the integrated
detector. We now isolate why integration can retain information that a
single-scale score statistic loses.

\paragraph{Information Carried by Different Diffusion Times.}
For smooth population probability distributions \(P_\infty\) and \(P_1\), set
\(\varepsilon=0\) and evolve both under the common heat flow
\[
P_{\infty,t}=P_\infty*\mathcal N(0,2tI_d),
\qquad
P_{1,t}=P_1*\mathcal N(0,2tI_d).
\] Assume that \(\operatorname{KL}(P_1\|P_\infty)<\infty\) and that the densities satisfy the positivity and regularity conditions of Lemma~\ref{lem:heat-flow-kl}.
Define \(a(t)=D_F(P_{1,t}\|P_{\infty,t})\). Then \(a(t)/2\) is the mean
post-change score evidence at time \(t\), while
\(\int_0^\infty a(t)dt=\operatorname{KL}(P_1\|P_\infty)\) whenever the
terminal KL divergence vanishes. SCUSUM \cite{wu2023scusum} uses the score difference at \(t=0\), whereas \methodname{} estimates the integral of score differences across diffusion times.

\begin{proposition}[Information in a finite diffusion window]
\label{prop:finite-window-information}
For every \(T>0\),
\[
\int_0^T a(t)\,dt
=
\operatorname{KL}(P_1\|P_\infty)
-\operatorname{KL}(P_{1,T}\|P_{\infty,T}).
\]
If \(\operatorname{KL}(P_1\|P_\infty)>0\), the fraction recovered by times
up to \(T\) is
\[
R(T)
=
\frac{\int_0^T a(t)\,dt}
{\operatorname{KL}(P_1\|P_\infty)}
=
1-
\frac{\operatorname{KL}(P_{1,T}\|P_{\infty,T})}
{\operatorname{KL}(P_1\|P_\infty)}.
\]
\end{proposition}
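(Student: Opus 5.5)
The plan is to apply Lemma~\ref{lem:heat-flow-kl} with $P_t=P_{1,t}$ and $Q_t=P_{\infty,t}$ on the interval $[a,b]=[0,T]$, then rearrange.

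\textbf{Step 1: the hypotheses hold.} The heat equation for each flow follows because $P_{r,t}=P_r*\mathcal N(0,2tI_d)$ and the Gaussian kernel solves $\partial_t\phi_t=\Delta\phi_t$. Differentiation under the convolution integral is justified for $t>0$ by the Gaussian tails, and at $t=0$ by the assumed smoothness; this is the same computation as in Appendix~\ref{app:proof-heat-kernel}. Positivity and the integration-by-parts conditions are assumed in the statement, via the regularity conditions of Lemma~\ref{lem:heat-flow-kl}.

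\textbf{Step 2: all quantities are finite.} The lemma then gives
\[
\int_0^T a(t)\,dt=\operatorname{KL}(P_{1}\|P_{\infty})-\operatorname{KL}(P_{1,T}\|P_{\infty,T}).
\]
For the subtraction to make sense, the terminal divergence must be finite. I will take this from the data-processing inequality: convolving both measures with the same Gaussian is a common Markov kernel, so
\[
\operatorname{KL}(P_{1,T}\|P_{\infty,T})\le\operatorname{KL}(P_1\|P_\infty)<\infty.
\]
The same inequality shows $t\mapsto\operatorname{KL}(P_{1,t}\|P_{\infty,t})$ is finite on $[0,T]$. Since $a\ge0$, the integral is also finite and is bounded by $\operatorname{KL}(P_1\|P_\infty)$.

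\textbf{Step 3: the ratio $R(T)$.} When $\operatorname{KL}(P_1\|P_\infty)>0$, divide the identity from Step~2 by $\operatorname{KL}(P_1\|P_\infty)$ to obtain the expression for $R(T)$.

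\textbf{Main obstacle.} The delicate point is the endpoint $t=0$. There $\varepsilon=0$, so smoothness comes only from the population densities and not from a Gaussian mixture. I would first apply the lemma on $[a,T]$ for $a>0$, where everything is a smooth Gaussian convolution. I would then let $a\downarrow0$. The integral $\int_a^T a(t)\,dt$ converges by monotone convergence, since $a\ge0$.

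For $\operatorname{KL}(P_{1,a}\|P_{\infty,a})\to\operatorname{KL}(P_1\|P_\infty)$, I would combine two facts:
\begin{itemize}
\item Lower semicontinuity of KL under the weak convergence $P_{r,a}\to P_r$ gives $\liminf_{a\downarrow0}\operatorname{KL}(P_{1,a}\|P_{\infty,a})\ge\operatorname{KL}(P_1\|P_\infty)$.
\item The data-processing bound from Step~2 gives $\limsup_{a\downarrow0}\operatorname{KL}(P_{1,a}\|P_{\infty,a})\le\operatorname{KL}(P_1\|P_\infty)$.
\end{itemize}
Together these give the limit, so the regularity assumptions are needed only for $t>0$.
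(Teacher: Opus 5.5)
Your proposal is correct and follows the paper's proof: apply Lemma~\ref{lem:heat-flow-kl} to $(P_{1,t},P_{\infty,t})$ on $[0,T]$, then divide by $\operatorname{KL}(P_1\|P_\infty)$. The paper leaves finiteness and the $t=0$ endpoint to the lemma's regularity hypotheses, so your data-processing check and your limit from a positive left endpoint (lower semicontinuity plus data processing) are additional rigor; rename that endpoint, though, since $a$ already denotes the curve $a(t)$.
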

Hence \(R(T)\) is exactly the fraction of forward KL information recovered by
the window \([0,T]\). Appendix~\ref{app:finite-window-information} gives the
proof and interpretation; \(f\) affects estimation variance but not \(R(T)\).

\begin{remark}[Anisotropic Gaussian comparison]
\label{rem:anisotropic-gaussian-separation}
Let \(P_\infty=\mathcal N(\theta_0,\Sigma)\) and
\(P_1=\mathcal N(\theta_1,\Sigma)\), with \(\Sigma\succ0\) and a
nonzero mean shift \(\delta=\theta_1-\theta_0\).
Define \(A=\delta^\top\Sigma^{-2}\delta\) and
\(B=\delta^\top\Sigma^{-3}\delta\). After scaling each method so that
\(\mathbb E_\infty[e^z]=1\), the post-change mean increments are
\[
I_{\mathrm{CUSUM}}=\tfrac12\delta^\top\Sigma^{-1}\delta,
\qquad
I_{\mathrm{SCUSUM}}=\frac{A^2}{2B}
\leq I_{\mathrm{CUSUM}}.
\]
Likelihood-ratio CUSUM weights covariance eigendirections by inverse
variance; SCUSUM uses inverse variance squared. A scalar multiplier cannot
align these weights when the shift spans directions with different variances,
giving a strict inequality. Equality holds when all directions carrying
the shift have the same variance.

Integration recovers the likelihood-ratio weights. For
\(D_t(y)=S_H(y;p_{\infty,t})-S_H(y;p_{1,t})\) and
\(Y_t=X+\sqrt{2t}\,W\),
\[
\mathbb E_W\!\left[2\int_0^\infty D_t(Y_t)\,dt\,\middle|\,X\right]
=\log\frac{p_1(X)}{p_\infty(X)}.
\]
Thus the ideal integrated increment equals the log-likelihood ratio,
giving \(I_{\methodtag}=I_{\mathrm{CUSUM}}\). 
Appendix~\ref{app:proof-anisotropic-gaussian-separation} gives the derivation;
Section~\ref{sec:simulation} tests this comparison.
\end{remark}

The Gaussian identity assumes exact noise averaging and integration
over all diffusion times; finite numerical implementations approximate it.
Adaptive weights across times change the target evidence and require
separate calibration (Appendix~\ref{app:weighted-discusum-details}). 

\begingroup

\section{Experiments}
\label{sec:simulation}

We first test whether diffusion integration recovers likelihood-ratio
CUSUM performance in an anisotropic Gaussian model, then compare sample-based
methods on MNIST and Oxford-IIIT Pet streams
(Figure~\ref{fig:image-experiments}). Thresholds are calibrated under no
change before evaluating detection delay.

\begin{figure*}[t]
  \centering
  \begin{minipage}[t]{0.323\textwidth}
    \centering
    \includegraphics[width=0.98\linewidth]{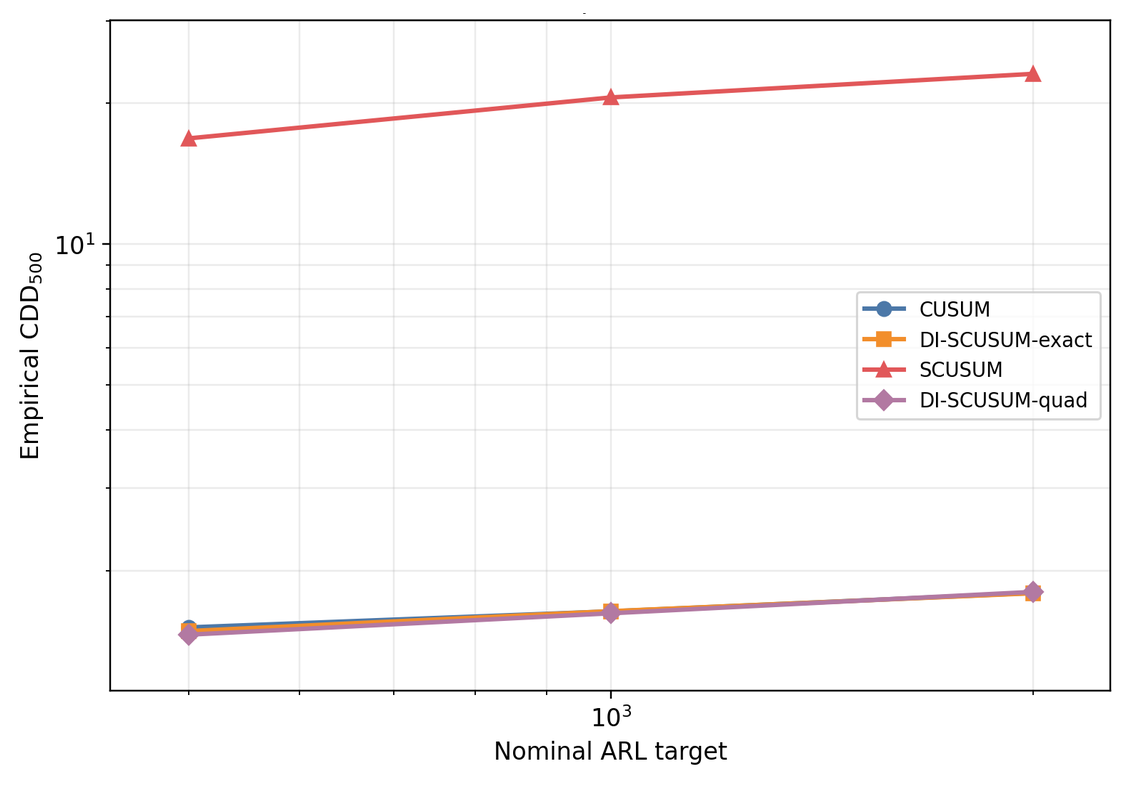}
    \vspace{-0.6em}

    {\textbf{(a)} Anisotropic Gaussian.}
  \end{minipage}\hfill
  \begin{minipage}[t]{0.323\textwidth}
    \centering
    \includegraphics[width=0.98\linewidth]{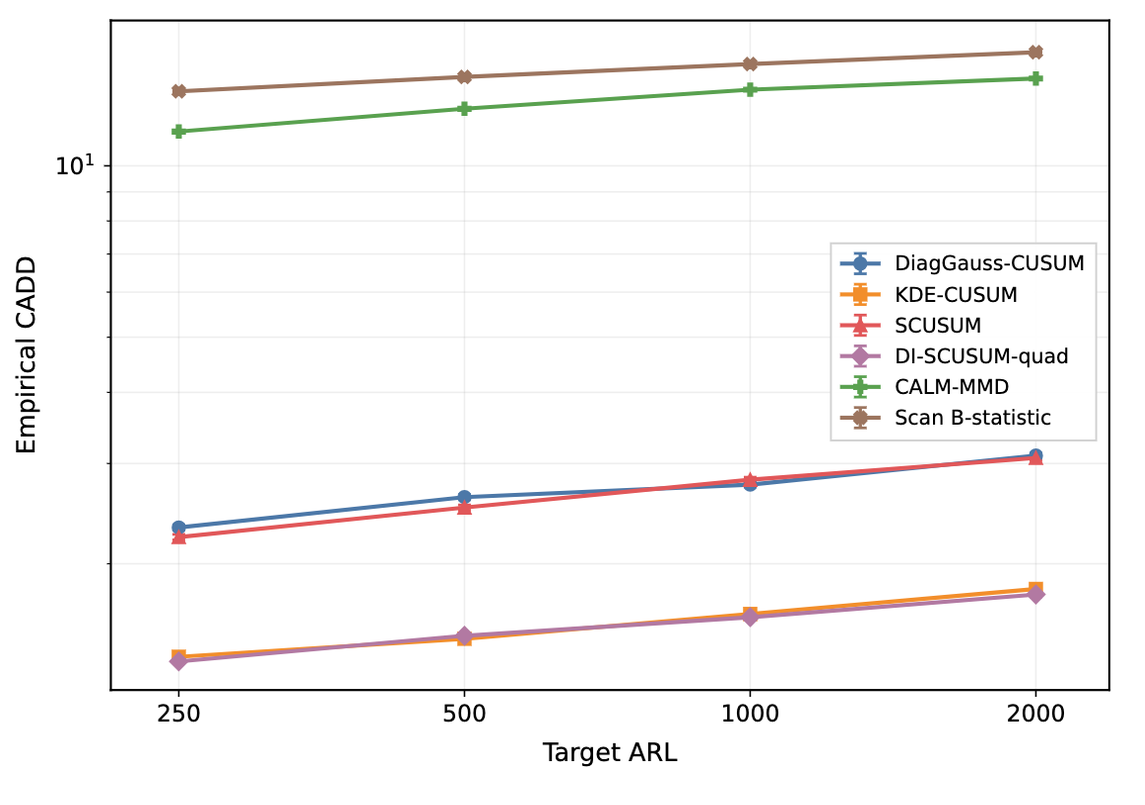}
    \vspace{-0.6em}

    {\textbf{(b)} MNIST 3 $\rightarrow$ 5.}
  \end{minipage}\hfill
  \begin{minipage}[t]{0.323\textwidth}
    \centering
    \includegraphics[width=0.98\linewidth]{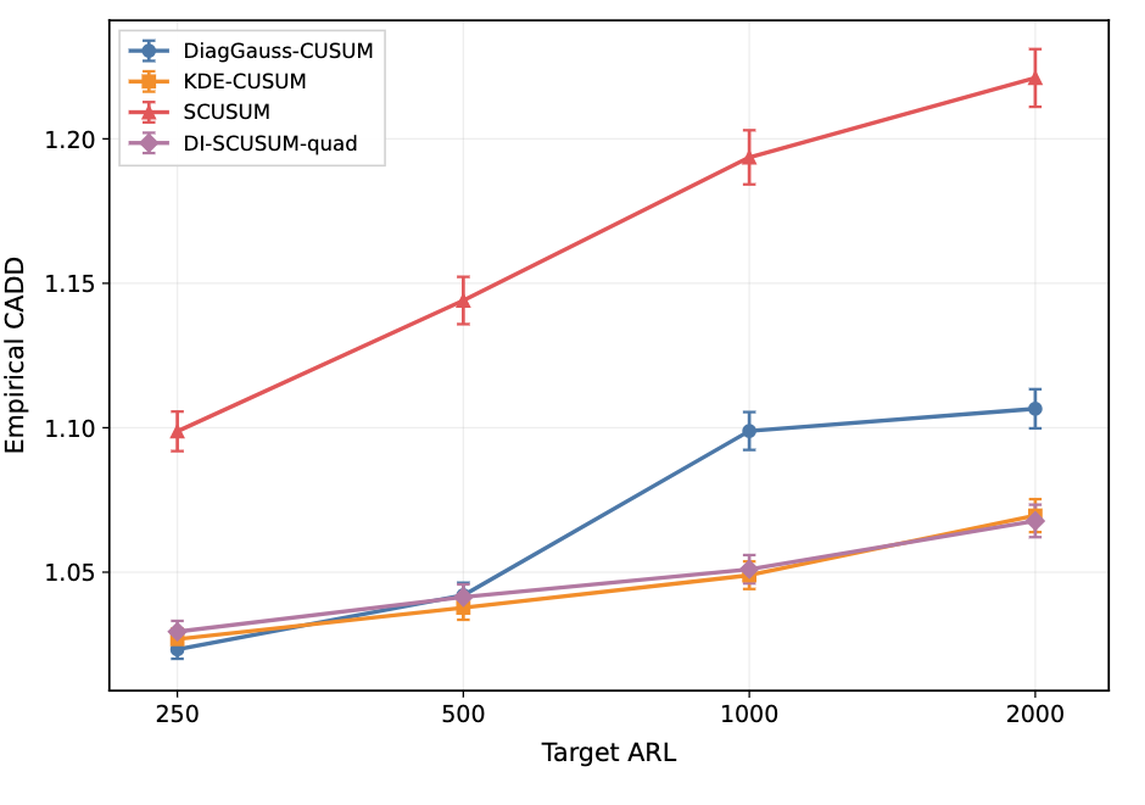}
    \vspace{-0.6em}

    \textbf{(c)} Oxford-IIIT Pet.
  \end{minipage}
  \caption{Detection delay versus nominal ARL target.
  (a) Gaussian $\mathrm{CDD}_{500,H}$; (b,c) image delay with conditional
  95\% Monte Carlo intervals. Attained ARLs are audited in the text.
  Delays count observations through the alarm; kernel methods start from
  initialized pre-change windows.}
  \label{fig:image-experiments}
\end{figure*}

\subsection{Anisotropic Gaussian Diagnostic}

\paragraph{Setup and Methods.}
The pre- and post-change means are
$\theta_0=(0,0)^\top$ and $\theta_1=(0.1,10)^\top$, with common covariance
$\Sigma=\operatorname{diag}(0.1,10)$.
After normalizing both methods so that \(\mathbb E_\infty[e^z]=1\), SCUSUM's post-change mean increment is \(0.19802\), or \(3.92\%\) of likelihood-ratio CUSUM's \(5.05\) (Remark~\ref{rem:anisotropic-gaussian-separation}).
We compare likelihood-ratio CUSUM, SCUSUM, Rao--Blackwellized
DI-SCUSUM-quad, and DI-SCUSUM-exact, whose increment equals the Gaussian
log-likelihood ratio. With $\varepsilon=0$, quadrature uses 128 log-spaced
times in $[10^{-5},10^4]$. Its coefficients $(5.02128,0.0501986)$ approximate
the exact $(5,0.05)$ with relative errors $0.426\%$ and $0.397\%$,
providing a numerical check independent of the delay estimates.
Appendix~\ref{app:additional-computation} gives the weights and thresholds.

\paragraph{Calibration and Delay.}
Each threshold is calibrated separately at
$\gamma\in\{500,1000,2000\}$ using 2,048 no-change streams,
bracket expansion, and 14 bisection steps. With horizon $H=6000$,
the calibration quantity is
$\operatorname{ARL}_H=\mathbb E_\infty[\min\{T,H\}]$, a lower bound
on uncensored ARL. Paths without an alarm are recorded at $H$.
Using 4,096 new streams with change time $\nu=500$, we estimate
\[
\operatorname{CDD}_{500,H}(T)
=\mathbb E_{500}[\min\{T,H\}-499\mid T\geq500].
\]
Conditioning excludes alarms before the change; subtracting
$\nu-1=499$ counts post-change observations through the alarm.
This horizon-censored fixed-change delay approximates its uncensored
counterpart when post-change censoring is negligible and does not estimate
Pollak's supremum over change times. 
\paragraph{Results.}
Censored no-change means are within $4.43\%$ of their targets.
Both integrated variants closely track CUSUM in
Figure~\ref{fig:image-experiments}(a), with maximum absolute delay
differences of $0.054$ observations for quadrature and $0.028$ for exact
integration. DI-SCUSUM-quad reduces fixed-change delay relative to SCUSUM
by about $91\%$, consistent with the predicted information recovery
under comparable finite-horizon calibration.

\subsection{Sample-Based Image Experiments}

\paragraph{Setup and Methods.}
MNIST~\cite{lecun1998gradient} uses digit 3 $\rightarrow$ 5,
12-dimensional PCA fitted on training pixels, and 600 reference images per
class. Oxford-IIIT Pet~\cite{parkhi2012cats} uses cats $\rightarrow$ dogs,
frozen ImageNet-pretrained ResNet-18 features~\cite{deng2009imagenet,he2016deep},
L2 normalization, and PCA fitted on \texttt{trainval}, with 450 reference
images per class. Methods share the 12-dimensional features and source
reference partition within each study.

Both studies compare diagonal-Gaussian CUSUM, Gaussian-KDE CUSUM,
SCUSUM, and DI-SCUSUM-quad with 20 time nodes and antithetic noise.
MNIST also includes CALM-MMD and the Scan $B$-statistic, using
25-observation windows and reference median-distance RBF bandwidths.
CALM uses an ARL-corrected time-varying schedule; Scan $B$ uses a
calibrated constant threshold.

\paragraph{Calibration and Delay.}
Targets are $\gamma\in\{250,500,1000,2000\}$; reference models and
drift corrections use training data only.  Calibration, ARL reporting,
and delay evaluation each use 8,192 independent paths from fixed empirical
populations. For CUSUM-type methods, calibration and reporting share a
pre-change increment pool. Delay is measured from $\nu=1$ and counts
observations through the alarm, attaining CADD under i.i.d.\ increments. 
Kernel methods instead start from initialized pre-change windows.
The 95\% Monte Carlo intervals condition on the datasets, references,
and simulation pools.

\paragraph{Results.}
On MNIST, the $5\%$ target-error and pairwise-ARL audit,
with censoring below $1\%$, passes for all six methods at targets
250 and 1000. At 500 and 2000, maximum target errors are $5.50\%$ and
$7.65\%$, and pairwise spreads are $6.25\%$ and $8.88\%$;
these points do not support strict six-method comparisons.
DI-SCUSUM-quad and SCUSUM pass their pairwise audit at all four targets:
ARL gaps are at most $3.76\%$, and CADD reductions are $39.5$--$42.7\%$.
DI-SCUSUM-quad remains close to KDE-CUSUM; kernel delays are substantially
larger at the two fully audited targets.

On Oxford-IIIT Pet, all methods pass the 99\%
lower-confidence-bound audit for $\operatorname{ARL}\geq\gamma$.
DI-SCUSUM-quad reduces CADD relative to SCUSUM by $6.3$--$12.6\%$,
with ARL gaps below $4.3\%$, and remains close to KDE-CUSUM. 
At $\gamma=2000$, diagonal-Gaussian CUSUM attains empirical ARL 3709;
Figure~\ref{fig:image-experiments}(c) plots nominal targets;
comparisons must account for the attained ARLs.

\paragraph{Comparison Scope.}
Likelihood and score methods use both pre- and post-change samples,
whereas kernel methods use only pre-change samples and update
25-observation windows. Their delay differences therefore reflect the
available information, window initialization, and detection statistic.
The MNIST kernel comparison does not isolate diffusion integration as the
sole source of improvement. The DI-SCUSUM-quad--SCUSUM comparison tests
integration more directly: both share the representation, reference models,
and reflected recursion, with thresholds calibrated separately on
no-change paths.

\endgroup

\clearpage

\subsection*{AI use statement}
In this work, we used ChatGPT and Codex (OpenAI) as supporting
tools during manuscript revision. Editorial assistance included improving
grammar, wording, clarity, flow, and presentation of author-provided text,
formatting figures and references, and checking literature and citations.
The tools also supported the review and refinement of mathematical
derivations and proof explanations, diagnostic numerical checks of
identities, and interpretation of existing experimental results. The
diagnostic checks did not generate the reported detection experiments.
The authors retain responsibility for evaluating AI-assisted suggestions
and for the final content of the paper, including its text, mathematical
claims, proofs, code, citations, and experimental results.

\makeatletter
\renewcommand{\@biblabel}[1]{}
\makeatother


\clearpage
\appendix
\thispagestyle{empty}


\onecolumn
\aistatstitle{Supplementary Materials}

\setcounter{section}{0}
\renewcommand{\thesection}{A}
\renewcommand{\thesubsection}{A.\arabic{subsection}}

\section{Proofs}

\subsection{Parameter Roles}
\label{app:parameter-roles}
The parameters \(\varepsilon\) and \(t\) therefore serve two
different purposes. The base scale \(\varepsilon\) is applied once to replace
the two discrete empirical measures by smooth distributions with common
support. Starting from these fixed reference models, varying \(t\) produces
increasingly smoothed views of the same two distributions. At small \(t\), the
score contrast remains sensitive to local differences near the reference
samples. As \(t\) increases, the added noise variance \(2t\) increases and
both distributions become smoother. Because the same Gaussian smoothing is
applied to both distributions, their KL divergence decreases and tends to
zero as \(t\to\infty\); it does not diverge. Large \(t\) therefore contributes
coarser but progressively weaker evidence. DI-SCUSUM integrates the score
contrast over the full path, combining the informative contributions across
scales rather than selecting one smoothing level in advance.

The four tuning quantities should be kept conceptually separate. The base
scale \(\varepsilon\) changes the two empirical plug-in
probability distributions themselves by
setting the minimum amount of Gaussian smoothing. The density \(f\) does not
change those probability distributions; it determines which diffusion times are sampled more often
and therefore affects the variance of the integral estimator. The multiplier
\(\lambda\) rescales every increment, while \(\tau\) determines how much
accumulated evidence is required before stopping. In particular, changing
\(f\) is an estimation choice, whereas changing \(\varepsilon\) changes the
statistical problem being detected.

Small \(\varepsilon\) keeps the smoothed probability distributions close to their empirical
measures but yields sharper scores and potentially large KL values when
supports differ; large \(\varepsilon\) improves numerical stability while
attenuating distributional differences. The time density \(f\) changes the
variance, not the target integral, provided it has adequate support.

\subsection{\ch{Gaussian-Mixture Heat-Flow Calculation}}
\label{app:proof-heat-kernel}
\begin{proof}

By convolution with the empirical measures,
\begin{align*}
\widehat p_{\infty,t}^{\varepsilon}(y)
&=\frac{1}{N_0}\sum_{i=1}^{N_0}\varphi_{h_t}(y-u_i),\\
\widehat p_{1,t}^{\varepsilon}(y)
&=\frac{1}{N_1}\sum_{j=1}^{N_1}\varphi_{h_t}(y-v_j),
\end{align*}
where, for \(s>0\),
\[
\varphi_s(x)=(4\pi s)^{-d/2}
\exp\!\left(-\frac{\|x\|^2}{4s}\right).
\]
Direct differentiation gives
\[
\partial_s\varphi_s(x)
=
\left(-\frac{d}{2s}+\frac{\|x\|^2}{4s^2}\right)\varphi_s(x).
\]
Moreover,
\[
\nabla\varphi_s(x)=-\frac{x}{2s}\varphi_s(x),
\qquad
\Delta\varphi_s(x)
=
\left(-\frac{d}{2s}+\frac{\|x\|^2}{4s^2}\right)\varphi_s(x).
\]
Hence \(\partial_s\varphi_s=\Delta\varphi_s\). Since
\(h_t=\varepsilon+t\), differentiation with respect to \(t\) is the same as
differentiation with respect to \(h_t\); finite sums commute with both
operators. Therefore both smoothed empirical densities satisfy the heat
equation \(\partial_t\widehat p_{r,t}^{\varepsilon}
=\Delta\widehat p_{r,t}^{\varepsilon}\) for
\(r\in\{\infty,1\}\).

\end{proof}

\subsection{\ch{Hyv\"arinen Score-Matching Identity}}
\label{app:proof-hyvarinen-score-identity}
\begin{proof}
Write $s_p=\nabla\log p$ and $s_q=\nabla\log q$. By definition,
\begin{align*}
\mathbb{E}_{P}\bigl[S_H(Y;q)-S_H(Y;p)\bigr]
&=
\int_{\mathbb{R}^d} p(y)
\left\{
\frac{1}{2}\lVert s_q(y)\rVert^2
-\frac{1}{2}\lVert s_p(y)\rVert^2
\right\}\,dy \\
&\quad+
\int_{\mathbb{R}^d} p(y)
\left\{
\Delta\log q(y)-\Delta\log p(y)
\right\}\,dy .
\end{align*}
Integration by parts gives
\[
\int_{\mathbb{R}^d}p\,\Delta\log q\,dy
=-\int_{\mathbb{R}^d}
\langle\nabla p,\nabla\log q\rangle\,dy
=-\int_{\mathbb{R}^d}p\,\langle s_p,s_q\rangle\,dy,
\]
and, similarly,
\[
\int_{\mathbb{R}^d}p\,\Delta\log p\,dy
=-\int_{\mathbb{R}^d}p\,\lVert s_p\rVert^2\,dy.
\]
Substitution yields
\begin{align*}
\mathbb{E}_{P}\bigl[S_H(Y;q)-S_H(Y;p)\bigr]
&=\frac{1}{2}\mathbb{E}_{P}\lVert s_q(Y)\rVert^2
  -\frac{1}{2}\mathbb{E}_{P}\lVert s_p(Y)\rVert^2 \\
&\quad-\mathbb{E}_{P}\langle s_p(Y),s_q(Y)\rangle
  +\mathbb{E}_{P}\lVert s_p(Y)\rVert^2 \\
&=\frac{1}{2}\mathbb{E}_{P}
  \left\lVert s_p(Y)-s_q(Y)\right\rVert^2 \\
&=\frac{1}{2}D_F(P\mathbin{\|}Q).
\end{align*}
The equivalent identity follows by multiplying both sides by $-1$.
\end{proof}

\subsection{Closed-Form Gaussian-Mixture Score Formulas}
\label{app:proof-closed-form-multiscale-evidence}

For the pre-change reference set \(\mathcal U\), define
\begin{equation}
\begin{aligned}
\alpha_i^{\infty,\varepsilon}(y;t)
&=
\frac{\exp(-\|y-u_i\|^2/(4h_t))}
{\sum_{\ell=1}^{N_0}\exp(-\|y-u_\ell\|^2/(4h_t))},
\\
k_t^{\infty,\varepsilon}(y)
&=\sum_{i=1}^{N_0}\alpha_i^{\infty,\varepsilon}(y;t)u_i,
\\
V_t^{\infty,\varepsilon}(y)
&=\sum_{i=1}^{N_0}\alpha_i^{\infty,\varepsilon}(y;t)
\|u_i-k_t^{\infty,\varepsilon}(y)\|^2.
\end{aligned}
\label{eq:barycentric-quantities}
\end{equation}
The post-change quantities
\(\alpha_j^{1,\varepsilon}\), \(k_t^{1,\varepsilon}\), and
\(V_t^{1,\varepsilon}\) are defined analogously using \(\mathcal V\).

\begin{proposition}[Gaussian-mixture score identities]
\label{prop:closed-form-empirical-scores}
For \(r\in\{\infty,1\}\), \(t\geq0\), and \(y\in\mathbb R^d\),
\begin{align}
\nabla\log\widehat p_{r,t}^{\varepsilon}(y)
&=\frac{k_t^{r,\varepsilon}(y)-y}{2h_t},
\label{eq:closed-form-score}\\
\operatorname{div}k_t^{r,\varepsilon}(y)
&=\frac{V_t^{r,\varepsilon}(y)}{2h_t},
\label{eq:barycentric-divergence}\\
S_H(y;\widehat p_{r,t}^{\varepsilon})
&=
\frac{\|k_t^{r,\varepsilon}(y)-y\|^2
+2V_t^{r,\varepsilon}(y)}
{8h_t^2}
-\frac{d}{2h_t}.
\label{eq:closed-form-hyvarinen}
\end{align}
Consequently, both terms in \(\Delta_t(y)\) can be evaluated exactly from
the corresponding reference set in \(O(N_r d)\) operations.
\end{proposition}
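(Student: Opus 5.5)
The plan is a direct computation from the mixture representation $\widehat p_{r,t}^{\varepsilon}(y)=\frac1{N_r}\sum_j\varphi_{h_t}(y-r_j)$ established in Appendix~\ref{app:proof-heat-kernel}. I fix $r$ and $t$, and write $h=h_t$, $\varphi_j=\varphi_h(y-r_j)$ and $\alpha_j=\varphi_j/\sum_\ell\varphi_\ell$. The normalizing constants $(4\pi h)^{-d/2}$ and $1/N_r$ cancel in this ratio, so $\alpha_j$ agrees with the weights in \eqref{eq:barycentric-quantities}.

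\emph{Score.} Using $\nabla\varphi_h(x)=-\frac{x}{2h}\varphi_h(x)$, I get $\nabla\log\widehat p=\sum_j\nabla\varphi_j/\sum_\ell\varphi_\ell=\sum_j\alpha_j\frac{r_j-y}{2h}$. Since $\sum_j\alpha_j=1$, this equals $(k(y)-y)/(2h)$, which is \eqref{eq:closed-form-score}.

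\emph{Divergence of the barycenter.} I first need $\nabla\alpha_j$. Writing $\log\alpha_j=\log\varphi_j-\log\sum_\ell\varphi_\ell$ gives
\[
\nabla\alpha_j=\alpha_j\left(\frac{r_j-y}{2h}-\frac{k-y}{2h}\right)=\alpha_j\frac{r_j-k}{2h}.
\]
Then $\operatorname{div}k=\sum_j\langle r_j,\nabla\alpha_j\rangle=\frac1{2h}\sum_j\alpha_j\langle r_j,r_j-k\rangle$. Because $\sum_j\alpha_j(r_j-k)=0$, I may subtract $\langle k,\cdot\rangle$ inside the sum. This turns it into $\frac1{2h}\sum_j\alpha_j\|r_j-k\|^2=V/(2h)$, which is \eqref{eq:barycentric-divergence}. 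This step is the only one that needs care: the softmax derivative and the centering trick have to be done correctly, since the rest is bookkeeping.

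\emph{Hyv\"arinen score.} From the score formula, $\Delta\log\widehat p=\operatorname{div}\frac{k-y}{2h}=\frac{1}{2h}(\operatorname{div}k-d)=\frac{V}{4h^2}-\frac{d}{2h}$. Also $\frac12\|s\|^2=\frac{\|k-y\|^2}{8h^2}$. Adding these gives $\frac{\|k-y\|^2+2V}{8h^2}-\frac d{2h}$, which is \eqref{eq:closed-form-hyvarinen}.

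\emph{Cost.} I compute the $N_r$ squared distances in $O(N_rd)$. For numerical stability I apply the softmax with log-sum-exp, which takes $O(N_r)$. The barycenter $k$ costs $O(N_rd)$, and $V$ needs one more pass of $O(N_rd)$. Both scores in $\Delta_t(y)$ are therefore exact in $O((N_0+N_1)d)$ operations.
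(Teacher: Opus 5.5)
Your proposal is correct and follows the paper's proof essentially step for step: the mixture score via $\nabla\varphi_h$, the softmax derivative $\nabla\alpha_j=\alpha_j(r_j-k)/(2h)$ obtained from $\log\alpha_j$, the centering identity $\sum_j\alpha_j(r_j-k)=0$ that turns $\operatorname{div}k$ into $V/(2h)$, and then combining $\tfrac12\|s\|^2$ with $\operatorname{div}s=(\operatorname{div}k-d)/(2h)$. The only differences are cosmetic: you use vector gradients and inner products where the paper takes coordinatewise partial derivatives and sums them, and you spell out the $O(N_r d)$ cost accounting that the paper leaves implicit.
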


\begin{proof}
\medskip\noindent\textit{Diffusion score.}
For the pre-change density,
\[
\widehat p_{\infty,t}^{\varepsilon}(y)
=\frac{1}{N_0}\sum_{i=1}^{N_0}
(4\pi(\varepsilon+t))^{-d/2}
\exp\!\left(-\frac{\lVert y-u_i\rVert^2}{4(\varepsilon+t)}\right).
\]
Differentiating gives
\[
\nabla\widehat p_{\infty,t}^{\varepsilon}(y)
=\frac{1}{N_0}\sum_{i=1}^{N_0}
\varphi_{\varepsilon+t}(y-u_i)
\left(-\frac{y-u_i}{2(\varepsilon+t)}\right).
\]
Dividing by $\widehat p_{\infty,t}^{\varepsilon}(y)$ yields
\begin{align*}
\nabla\log\widehat p_{\infty,t}^{\varepsilon}(y)
&=\sum_{i=1}^{N_0}\alpha_i^{\infty,\varepsilon}(y;t)
  \frac{u_i-y}{2(\varepsilon+t)} \\
&=\frac{k_t^{\infty,\varepsilon}(y)-y}{2(\varepsilon+t)}.
\end{align*}
The proof for $\widehat p_{1,t}^{\varepsilon}$ is identical.

\medskip\noindent\textit{Barycentric divergence.}
We prove the pre-change identity; the post-change identity is identical. Let
$h=\varepsilon+t$ and $k(y)=k_t^{\infty,\varepsilon}(y)$. For notational
convenience, write $\alpha_i(y)=\alpha_i^{\infty,\varepsilon}(y;t)$. Then
\[
\alpha_i(y)
=\frac{\exp\!\left(-\lVert y-u_i\rVert^2/(4h)\right)}
{\displaystyle\sum_{\ell=1}^{N_0}
 \exp\!\left(-\lVert y-u_\ell\rVert^2/(4h)\right)}.
\]
For $r\in\{1,\ldots,d\}$,
\begin{align*}
\frac{\partial}{\partial y_r}\log\alpha_i(y)
&=-\frac{y_r-u_{i,r}}{2h}
  +\sum_{\ell=1}^{N_0}\alpha_\ell(y)
   \frac{y_r-u_{\ell,r}}{2h} \\
&=\frac{u_{i,r}-k_r(y)}{2h}.
\end{align*}
Thus
\[
\frac{\partial}{\partial y_r}\alpha_i(y)
=\alpha_i(y)\frac{u_{i,r}-k_r(y)}{2h}.
\]
Since $k_r(y)=\sum_{i=1}^{N_0}\alpha_i(y)u_{i,r}$,
\begin{align*}
\frac{\partial}{\partial y_r}k_r(y)
&=\sum_{i=1}^{N_0}u_{i,r}
  \frac{\partial}{\partial y_r}\alpha_i(y) \\
&=\frac{1}{2h}\sum_{i=1}^{N_0}
  \alpha_i(y)u_{i,r}\bigl(u_{i,r}-k_r(y)\bigr).
\end{align*}
Because
$\sum_{i=1}^{N_0}\alpha_i(y)(u_{i,r}-k_r(y))=0$, we may subtract
$k_r(y)$ from the first factor $u_{i,r}$ to obtain
\[
\frac{\partial}{\partial y_r}k_r(y)
=\frac{1}{2h}\sum_{i=1}^{N_0}
\alpha_i(y)\bigl(u_{i,r}-k_r(y)\bigr)^2.
\]
Summing over $r=1,\ldots,d$ gives the desired divergence formula.

\medskip\noindent\textit{Hyv\"arinen score.}
For the pre-change probability distribution,
\[
s_t^{\infty,\varepsilon}(y)
\coloneqq
\nabla\log\widehat p_{\infty,t}^{\varepsilon}(y)
=\frac{k_t^{\infty,\varepsilon}(y)-y}{2(\varepsilon+t)}.
\]
Therefore,
\[
\frac{1}{2}\left\lVert s_t^{\infty,\varepsilon}(y)\right\rVert^2
=\frac{\left\lVert k_t^{\infty,\varepsilon}(y)-y\right\rVert^2}
       {8(\varepsilon+t)^2}.
\]
Moreover,
\begin{align*}
\operatorname{div}s_t^{\infty,\varepsilon}(y)
&=\frac{\operatorname{div}k_t^{\infty,\varepsilon}(y)}
        {2(\varepsilon+t)}
  -\frac{\operatorname{div}y}{2(\varepsilon+t)} \\
&=\frac{\operatorname{div}k_t^{\infty,\varepsilon}(y)}
        {2(\varepsilon+t)}
  -\frac{d}{2(\varepsilon+t)}.
\end{align*}
Since
\[
S_H(y;r)
=\frac{1}{2}\lVert\nabla\log r(y)\rVert^2
+\operatorname{div}\!\bigl(\nabla\log r(y)\bigr),
\]
the first formula follows. The proof for the post-change
probability distribution is identical.
\end{proof}

\subsection{\ch{Proof of Lemma~\ref{lem:heat-flow-kl}}}
\label{app:proof-heat-flow-kl-dissipation}

\begin{proof}
Differentiating the relative entropy gives
\begin{align*}
\frac{d}{dt}\operatorname{KL}(P_t\mathbin{\|}Q_t)
&=
\frac{d}{dt}\int_{\mathbb{R}^d}
p_t\log\frac{p_t}{q_t}\,dy \\
&=
\int_{\mathbb{R}^d}(\partial_t p_t)
\log\frac{p_t}{q_t}\,dy \\
&\quad+
\int_{\mathbb{R}^d}p_t
\left(
\frac{\partial_t p_t}{p_t}
-\frac{\partial_t q_t}{q_t}
\right)\,dy .
\end{align*}
Since $\int_{\mathbb{R}^d}\partial_t p_t\,dy=0$, the heat equations imply
\[
\frac{d}{dt}\operatorname{KL}(P_t\mathbin{\|}Q_t)
=\int_{\mathbb{R}^d}(\Delta p_t)
  \log\frac{p_t}{q_t}\,dy
-\int_{\mathbb{R}^d}p_t\frac{\Delta q_t}{q_t}\,dy.
\]
Integration by parts in the first term gives
\[
\int_{\mathbb{R}^d}(\Delta p_t)
\log\frac{p_t}{q_t}\,dy
=-\int_{\mathbb{R}^d}p_t
\left\langle
  \nabla\log p_t,
  \nabla\log\frac{p_t}{q_t}
\right\rangle dy.
\]
For the second term, use
\[
\frac{\Delta q_t}{q_t}
=\Delta\log q_t+\lVert\nabla\log q_t\rVert^2
\]
and
\[
\int_{\mathbb{R}^d}p_t\Delta\log q_t\,dy
=-\int_{\mathbb{R}^d}p_t
\left\langle\nabla\log p_t,\nabla\log q_t\right\rangle dy.
\]
Combining these identities yields
\begin{align*}
\frac{d}{dt}\operatorname{KL}(P_t\mathbin{\|}Q_t)
&=-\int_{\mathbb{R}^d}p_t
\left\lVert\nabla\log p_t-\nabla\log q_t\right\rVert^2dy \\
&=-D_F(P_t\mathbin{\|}Q_t).
\end{align*}
Integrating from $0$ to $R$ gives
\[
\int_0^R D_F(P_t\mathbin{\|}Q_t)\,dt
=\operatorname{KL}(P_0\mathbin{\|}Q_0)
-\operatorname{KL}(P_R\mathbin{\|}Q_R).
\]
Letting $R\to\infty$ and using
$\operatorname{KL}(P_R\mathbin{\|}Q_R)\to0$ proves the result.
\end{proof}

\subsection{Proof of the KL Drift Identities}
\label{app:proof-kl-drift-identities}

\begin{proof}
Condition on the sampled diffusion time $t$. Under the post-change empirical
model, $X\sim\widehat Q^0$, and therefore
\[
Y\mid t
=X+\sqrt{2(\varepsilon+t)}\,W
\sim\widehat P_{1,t}^{\varepsilon}.
\]
By the Hyv\"arinen-score identity,
\begin{align*}
&\mathbb{E}_{Y\sim\widehat P_{1,t}^{\varepsilon}}
\left[
S_H\!\left(Y;\widehat p_{\infty,t}^{\varepsilon}\right)
-S_H\!\left(Y;\widehat p_{1,t}^{\varepsilon}\right)
\right]
\\
&\qquad
=\frac{1}{2}D_F\!\left(
\widehat P_{1,t}^{\varepsilon}
\mathbin{\|}
\widehat P_{\infty,t}^{\varepsilon}
\right).
\end{align*}
Consequently,
\begin{align*}
\mathbb{E}_1\!\left[z_{\lambda,\varepsilon}^{\methodtag}\right]
&=\int_0^\infty
\frac{2\lambda}{f(t)}\,
\frac{1}{2}D_F\!\left(
\widehat P_{1,t}^{\varepsilon}
\mathbin{\|}
\widehat P_{\infty,t}^{\varepsilon}
\right)f(t)\,dt \\
&=\lambda\int_0^\infty D_F\!\left(
\widehat P_{1,t}^{\varepsilon}
\mathbin{\|}
\widehat P_{\infty,t}^{\varepsilon}
\right)\,dt.
\end{align*}
The heat-flow KL dissipation identity gives
\[
\int_0^\infty D_F\!\left(
\widehat P_{1,t}^{\varepsilon}
\mathbin{\|}
\widehat P_{\infty,t}^{\varepsilon}
\right)\,dt
=\operatorname{KL}\!\left(
\widehat Q^{\varepsilon}
\mathbin{\|}
\widehat P^{\varepsilon}
\right).
\]
Here the limiting KL divergence is zero because the two Gaussian mixtures
have the same increasing covariance scale and finite-support mixing measures.
Hence
\[
\mathbb{E}_1\!\left[z_{\lambda,\varepsilon}^{\methodtag}\right]
=\lambda\operatorname{KL}\!\left(
\widehat Q^{\varepsilon}\mathbin{\|}\widehat P^{\varepsilon}
\right).
\]

Under the pre-change empirical model,
$Y\mid t\sim\widehat P_{\infty,t}^{\varepsilon}$. The
Hyv\"arinen-score identity now gives
\begin{align*}
&\mathbb{E}_{Y\sim\widehat P_{\infty,t}^{\varepsilon}}
\left[
S_H\!\left(Y;\widehat p_{\infty,t}^{\varepsilon}\right)
-S_H\!\left(Y;\widehat p_{1,t}^{\varepsilon}\right)
\right]
\\
&\qquad
=-\frac{1}{2}D_F\!\left(
\widehat P_{\infty,t}^{\varepsilon}
\mathbin{\|}
\widehat P_{1,t}^{\varepsilon}
\right).
\end{align*}
Averaging over $t$ and applying the heat-flow identity yields
\begin{align*}
\mathbb{E}_{\infty}\!\left[z_{\lambda,\varepsilon}^{\methodtag}\right]
&=-\lambda\int_0^\infty D_F\!\left(
\widehat P_{\infty,t}^{\varepsilon}
\mathbin{\|}
\widehat P_{1,t}^{\varepsilon}
\right)\,dt \\
&=-\lambda\operatorname{KL}\!\left(
\widehat P^{\varepsilon}
\mathbin{\|}
\widehat Q^{\varepsilon}
\right).
\end{align*}
\end{proof}

\subsection{Proof of Lemma~\ref{lem:admissible-multiplier}}
\label{app:proof-admissible-multiplier}
\begin{proof}
\begingroup
By \eqref{eq:kl-drifts},
\[
\mathbb E_\infty U_n
=-\operatorname{KL}(\widehat P^\varepsilon\|\widehat Q^\varepsilon)<0.
\]
If $M(\eta)<\infty$, dominated convergence gives the right derivative
$M'(0+)=\mathbb E_\infty U_n<0$.
Indeed, for $0<s\leq\eta/2$, the absolute difference quotient
$|(e^{sU_n}-1)/s|$ is bounded by $|U_n|$ on $\{U_n<0\}$
and by $U_n e^{\eta U_n/2}$ on $\{U_n\geq0\}$; both are integrable.
Since $M(0)=1$, this proves part~1.

For part~2, $M(b)<\infty$ ensures continuity on $[0,b]$.
Moreover, $M$ is strictly convex on its finite domain because
$\mathbb P_\infty(U_n\ne0)>0$. Part~1 gives $M(s)<1$ for small
$s>0$, whereas $M(b)>1$. Continuity yields a root
$\lambda_\star\in(0,b)$. Strict convexity gives
$M(s)<1$ for $0<s<\lambda_\star$ and $M(s)>1$ for
$s>\lambda_\star$ wherever $M(s)$ is finite; outside that domain
$M(s)=\infty$. This proves uniqueness and the stated admissible interval.

Finally,
\[
\mathbb E_\infty[e^{\theta z_n^{\methodtag}}]=M(\theta\lambda),
\]
so the positive root is $\lambda_\star/\lambda$.
Because $\lambda_\star<b$, the moment-generating function is finite
in a neighborhood of this root. Induction in the reflected recursion
gives $Z_n^{(\lambda)}=\lambda Z_n^{(1)}$, proving the stated
stopping-time identity.
\endgroup
\end{proof}

\subsection{Proof of Theorem~\ref{thm:false-alarm}}
\label{app:proof-average-run-length-scaling}

\begin{proof}
Let
\[
S_n=\sum_{k=1}^{n}z_k,
\]
where the $z_k$ are independent copies of
$z_{\lambda,\varepsilon}^{\methodtag}$ under $P_\infty$. By the
pre-change condition of Theorem~\ref{thm:false-alarm},
\[
M_n
\coloneqq
\exp\!\left(\theta_{\lambda,\varepsilon}S_n\right)
\]
is a nonnegative martingale under $P_\infty$. The reflected CUSUM statistic
has the representation
\[
Z_n=S_n-\min_{0\leq k\leq n}S_k.
\]
Thus a crossing of $Z_n$ above $\tau$ occurs when an excursion of the
negative-drift random walk exceeds height $\tau$. The Cram\'er--Lundberg
estimate for the ascending ladder-height process gives
\[
\mathbb{P}_\infty
\bigl(\text{one excursion exceeds }\tau\bigr)
\asymp
C_{\lambda,\varepsilon}
\exp\!\left(-\theta_{\lambda,\varepsilon}\tau\right)
\]
for some constant $C_{\lambda,\varepsilon}>0$. The excursions are
regenerative at returns of the reflected process to zero. Consequently, the
expected number of excursions before a false alarm grows on the order of
$\exp(\theta_{\lambda,\varepsilon}\tau)$. The finite mean cycle length and
renewal constants contribute only a multiplicative constant, giving
\(\log\mathbb E_\infty[T_{\methodtag}]
=\theta_{\lambda,\varepsilon}\tau+O(1)\).
\end{proof}

\subsection{Proof of Theorem~\ref{thm:detection-delay}}
\label{app:proof-kl-form-first-order-detection-delay}

\begin{proof}

Conditional on no alarm before a change at time $\nu$, the state satisfies
$0\leq Z_{\nu-1}<\tau$.  Couple two post-change recursions by giving them
the same increments.  The recursion $z\mapsto\max\{0,z+x\}$ is
nondecreasing in $z$, so the time needed to reach $\tau$ is nonincreasing
in the starting state.  The conditional delay for any $\nu$ is therefore
at most the delay from $Z_0=0$.  When $\nu=1$, the procedure starts from
$Z_0=0$, and hence
\[
\operatorname{CADD}(T_{\methodtag})
=\mathbb E_1[T_{\methodtag}].
\]

After the change, let
\[
z_k=z_{\lambda,\varepsilon,k}^{\methodtag},
\qquad
S_n=\sum_{k=1}^{n}z_k.
\]
The KL drift identity gives
\[
\mu_1
=\lambda\operatorname{KL}\!\left(
\widehat Q^{\varepsilon}\mathbin{\|}\widehat P^{\varepsilon}
\right)>0.
\]
For the reflected recursion started at $Z_0=0$,
\[
Z_n=S_n-\min_{0\leq k\leq n}S_k\geq S_n.
\]
Therefore the reflected statistic crosses $\tau$ no later than the
unreflected random walk, so
\[
T_{\methodtag}\leq H_\tau.
\]
Wald's identity and the bounded mean overshoot give
\begin{align*}
\mu_1\mathbb{E}_1[H_\tau]
&=\mathbb{E}_1[S_{H_\tau}] \\
&=\tau+\mathbb{E}_1[S_{H_\tau}-\tau] \\
&\leq\tau+C_{\mathrm{os}}.
\end{align*}
Hence
\[
\mathbb{E}_1\!\left[T_{\methodtag}\right]
\leq
\frac{\tau+C_{\mathrm{os}}}{\mu_1}.
\]
For the matching lower bound, write
\[
T=T_{\methodtag}.
\]
At the stopping time,
\begin{align*}
S_T
&=Z_T+\min_{0\leq k\leq T}S_k \\
&\geq \tau+\inf_{n\geq0}S_n.
\end{align*}
Since the upper bound above implies $\mathbb{E}_1[T]<\infty$, Wald's
identity applies and yields
\[
\mu_1\mathbb{E}_1[T]
=\mathbb{E}_1[S_T]
\geq\tau-C_{\min}.
\]
Combining the two bounds and substituting the value of $\mu_1$ proves the
result.
\end{proof}

\subsection{Time-Density Optimization and Monte Carlo Estimation}
\label{app:proof-optimal-time-law}


Although \(f\) does not change the mean increment, it controls the estimator's
second moment. Let \(X\) follow the post-change plug-in probability distribution, let
\(W\sim\mathcal N(0,I_d)\) be independent of \(X\), and define

\[
Y_t=X+\sqrt{2(\varepsilon+t)}W.
\]
With
\[
G_t(X,W)
=2\{S_H(Y_t;\widehat p_{\infty,t}^{\varepsilon})
-S_H(Y_t;\widehat p_{1,t}^{\varepsilon})\},
\]
write
\[
m_2(t)=\mathbb E_1[G_t(X,W)^2].
\]
The optimal importance-sampling density is determined by the
following second-moment calculation.

\begin{proposition}[Second-moment-optimal time density]
\label{prop:optimal-time-law}
Assume
\(0<\int_0^\infty\sqrt{m_2(t)}\,dt<\infty\). Among all admissible densities
\(f\), the post-change second moment of the increment satisfies
\begin{equation}
\mathbb E_1[(z_n^{\methodtag})^2]
=\lambda^2\int_0^\infty\frac{m_2(t)}{f(t)}\,dt
\geq\lambda^2\left(\int_0^\infty\sqrt{m_2(t)}\,dt\right)^2.
\label{eq:optimal-time-law-bound}
\end{equation}
Equality holds for
\[
f^\star(t)=
\frac{\sqrt{m_2(t)}}
{\int_0^\infty\sqrt{m_2(s)}\,ds}.
\]
\end{proposition}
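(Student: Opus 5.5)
The plan is to compute the second moment of $z_n^{\methodtag}$ by conditioning on the sampled time, and then to minimize over $f$ using Cauchy--Schwarz.

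\textbf{Step 1: the second-moment formula.} By construction, $z_n^{\methodtag}=\lambda G_{t_n}(X_n,W_n)/f(t_n)$, where $t_n\sim f$ is independent of $(X_n,W_n)$. Under the post-change plug-in model, conditioning on $t_n=t$ gives
\[
\mathbb E_1\!\left[(z_n^{\methodtag})^2\mid t_n=t\right]
=\frac{\lambda^2 m_2(t)}{f(t)^2}.
\]
Integrating against $f(t)\,dt$ then yields
\[
\mathbb E_1[(z_n^{\methodtag})^2]=\lambda^2\int_0^\infty \frac{m_2(t)}{f(t)}\,dt.
\]
This step is justified by Tonelli's theorem, because the integrand is nonnegative; the value $+\infty$ is allowed. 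Since $f>0$ everywhere, there is no division by zero.

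\textbf{Step 2: the lower bound.} Write $\sqrt{m_2}=\bigl(\sqrt{m_2}/\sqrt f\bigr)\sqrt f$ and apply Cauchy--Schwarz on $(0,\infty)$:
\[
\left(\int_0^\infty\sqrt{m_2(t)}\,dt\right)^2
\le\int_0^\infty\frac{m_2(t)}{f(t)}\,dt\cdot\int_0^\infty f(t)\,dt
=\int_0^\infty\frac{m_2(t)}{f(t)}\,dt.
\]
Here $\int f=1$. If the right-hand side is infinite, the inequality holds trivially. Multiplying by $\lambda^2$ gives \eqref{eq:optimal-time-law-bound}.

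\textbf{Step 3: attainment.} The assumption $0<\int_0^\infty\sqrt{m_2}<\infty$ makes $f^\star$ a well-defined probability density. Substituting it, we have $m_2/f^\star=\sqrt{m_2}\,\int_0^\infty\sqrt{m_2(s)}\,ds$ wherever $m_2>0$. Integrating gives exactly $\bigl(\int_0^\infty\sqrt{m_2}\bigr)^2$, so equality holds. This also matches the Cauchy--Schwarz equality condition, namely $\sqrt f\propto\sqrt{m_2}/\sqrt f$.

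\textbf{Main obstacle: admissibility of $f^\star$.} The Algorithm requires $f(t)>0$ for every $t$, but $f^\star$ vanishes wherever $m_2(t)=0$. I expect to handle this in two parts.

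First, $m_2(t)=0$ forces $G_t=0$ almost surely. For such $t$, the times never contribute to the integral, so the formula in Step 1 still holds if $0/0$ is read as $0$ on $\{m_2=0\}$.

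Second, if strict positivity is insisted upon, the plan is to use the mixtures $f_\delta=(1-\delta)f^\star+\delta g$ with $g>0$. Their second moments converge to the bound as $\delta\downarrow0$, by monotone or dominated convergence, since
\[
\frac{m_2}{f_\delta}\le\frac{m_2}{(1-\delta)f^\star}.
\]
So the bound is the infimum in either reading.

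I will also check that the unbiasedness integrability assumption continues to hold under $f^\star$. This is immediate, because it does not depend on $f$.
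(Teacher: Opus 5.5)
Your proposal is correct and follows the paper's proof: it uses the same Cauchy--Schwarz factorization $\sqrt{m_2}=\bigl(\sqrt{m_2/f}\bigr)\sqrt f$ with $\int_0^\infty f=1$, and you additionally derive $\mathbb E_1[(z_n)^2]=\lambda^2\int_0^\infty m_2/f\,dt$ by conditioning on $t_n$ and applying Tonelli, a step the paper takes as given. Your admissibility discussion is a harmless extra; in fact, when $\widehat P^\varepsilon\neq\widehat Q^\varepsilon$, one has $m_2(t)\geq(\mathbb E_1 G_t)^2=D_F(\widehat P_{1,t}^{\varepsilon}\|\widehat P_{\infty,t}^{\varepsilon})^2>0$ for every $t$ (Fisher divergence vanishes only if the smoothed distributions coincide, and Gaussian deconvolution then forces $\widehat P^\varepsilon=\widehat Q^\varepsilon$), so $f^\star$ is strictly positive and the limiting argument is not needed.
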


Thus \(f^\star\) samples times in proportion to their root second moment;
in practice, \(m_2\) can be estimated on a pilot grid. The one-sample
increment in Section~\ref{sec:detector-construction} is the least expensive
Monte Carlo estimator. With a larger per-observation budget, one may average
\(M\) independent time--noise pairs,

\[
\widehat z_{n,M}^{\mathrm{MC}}
=\frac{\lambda}{M}\sum_{m=1}^M
\frac{G_{t_{n,m}}(X_n,W_{n,m})}{f(t_{n,m})}.
\]

\begin{proof}

Cauchy--Schwarz applied to
\(\sqrt{m_2(t)/f(t)}\) and \(\sqrt{f(t)}\) gives
\[
\begin{aligned}
\left(\int_0^\infty\sqrt{m_2(t)}\,dt\right)^2
&\leq
\left(\int_0^\infty\frac{m_2(t)}{f(t)}\,dt\right)
\\[-2pt]
&\quad\times\left(\int_0^\infty f(t)\,dt\right).
\end{aligned}
\]
The last factor is one, and equality requires
\(f(t)\propto\sqrt{m_2(t)}\).

\end{proof}

\subsection{Additional Computation and Approximation Details}
\label{app:additional-computation}

\paragraph{Gaussian Diagnostic Implementation.}

For diagonal variances $\sigma_i$, the Rao--Blackwellized increment is
\[
z^{\mathrm{quad}}(X)=
\sum_{i=1}^d\bigl\{(X_i-\theta_{0,i})^2-(X_i-\theta_{1,i})^2\bigr\}q_i,
\qquad
q_i=\sum_{\ell=1}^{128}\frac{w_\ell}{(\sigma_i+2t_\ell)^2}.
\]
For the log-spaced grid in $[10^{-5},10^4]$, weights are
$w_1=t_2-t_1$, $w_{128}=t_{128}-t_{127}$, and
$w_\ell=(t_{\ell+1}-t_{\ell-1})/2$ at interior nodes.
The coefficients $(5.02128,0.0501986)$ have relative errors
$0.426\%$ and $0.397\%$ against the exact $(5,0.05)$.
Calibration uses bracket expansion and 14 bisection steps; the fixed-change
delay estimates retain 1,507--3,317 paths without a prior alarm.
The run uses seed 123, 32-bit PyTorch arithmetic, and an NVIDIA RTX PRO
6000 Blackwell GPU. Table~\ref{tab:simulation-thresholds} lists thresholds.

\paragraph{Gaussian Diagnostic Interpretation.}
The Gaussian panel reports the four methods at the same nominal ARL targets,
with horizon-censored delay values read directly from the plotted points. CUSUM and
DI-SCUSUM-exact coincide analytically; DI-SCUSUM-quad follows closely, with
residual error from quadrature. SCUSUM is slower in this example because its score difference weights
covariance directions by inverse variance squared, whereas the log-likelihood
ratio uses inverse variance. Thus the
panel checks the numerical identity and predicted information gap without a
second table.

\paragraph{Pilot Time-Density Selection and Monte Carlo.}

The optimal density \(f^\star(t)\propto\sqrt{m_2(t)}\) depends on a
post-change second moment that is usually unavailable in closed form. It can
be estimated offline by simulating from the post-change reference model over
a pilot grid of diffusion times, smoothing the resulting moment estimates,
and normalizing their square roots. A simpler alternative is to choose a
parametric family for \(f\) and minimize the estimated second moment within
that family. Averaging \(M\) independently sampled time--noise pairs retains
the same target as the one-pair estimator and reduces the conditional Monte
Carlo variance at a proportional increase in score evaluations.

\paragraph{Rao--Blackwellization and Quadrature.}

Replacing \(G_t(X,W)\) by
\(\overline G_t(X)=\mathbb E[G_t(X,W)\mid X]\) removes the auxiliary-noise
variance without changing the conditional target. A deterministic quadrature
rule can then approximate the remaining time integral. Unlike Monte Carlo,
quadrature has no time-sampling variance, but its finite nodes introduce
discretization error and, on an unbounded domain, truncation error. The finite
window identity below quantifies the information omitted by truncation; the
remaining discretization error depends on the smoothness of
\(t\mapsto\overline G_t(X)\) and on the chosen rule.

\paragraph{Threshold Calibration.}

For a target \(\operatorname{ARL}=\gamma\), the renewal approximation
\(\tau_0=\log(\gamma)/\theta_{\lambda,\varepsilon}\) provides an initial
threshold. Finite-threshold overshoot and the complete increment distribution
affect the realized ARL, so we simulate no-change streams, enlarge a threshold
bracket until it contains the target, and refine it by bisection. The procedure
must be repeated when $\varepsilon$, $f$, the number of Monte Carlo samples,
or the quadrature rule changes. If only the multiplier changes from
$\lambda$ to $\lambda'$, the threshold changes exactly from $\tau$ to
$(\lambda'/\lambda)\tau$, without further simulation. Detector comparisons are
made only after calibration to comparable false-alarm levels.

\paragraph{Finite Diffusion Window.}
If the time density is supported on \(0\leq a<b<\infty\), the post-change
drift is
\[
\lambda\int_a^b
D_F(\widehat P_{1,t}^{\varepsilon}\|
    \widehat P_{\infty,t}^{\varepsilon})\,dt
=
\lambda\!\left[
\operatorname{KL}(\widehat P_{1,a}^{\varepsilon}\|
                  \widehat P_{\infty,a}^{\varepsilon})
-
\operatorname{KL}(\widehat P_{1,b}^{\varepsilon}\|
                  \widehat P_{\infty,b}^{\varepsilon})
\right].
\]
Thus truncation preserves a valid score statistic but replaces the full KL
information by its decrease over the chosen window.

\paragraph{Computational Cost and Local Mixture Approximations.}

Exact evaluation of both barycentric maps and local variances costs
\(O((N_0+N_1)d)\) per diffusion time, and hence
\(O(M(N_0+N_1)d)\) for \(M\) sampled times or quadrature nodes. When
\(h_t=\varepsilon+t\) is small or moderate, the softmax weights decay
exponentially in squared distance. One may therefore retain the \(K\) nearest
reference points, optionally augmented by \(L\) random points from the
remainder. This reduces cost but introduces pointwise error governed by the
omitted softmax mass and sample geometry; its sequential effect is not covered
by the present theory. At larger \(h_t\), the weights spread more broadly and
the retained neighborhood must grow.

\paragraph{Additional Barycentric Smoothing.}
For perturbations \(\xi_1,\ldots,\xi_M\) and \(\sigma\geq0\), one may replace
\(k_t^{r,\varepsilon}\) by
\[
\widetilde k_{t,\sigma}^{r,\varepsilon}(y)
=\frac1M\sum_{m=1}^M
k_t^{r,\varepsilon}(y+\sigma\xi_m),
\qquad
\widetilde s_{t,\sigma}^{r,\varepsilon}(y)
=\frac{\widetilde k_{t,\sigma}^{r,\varepsilon}(y)-y}{2h_t}.
\]
At \(\sigma=0\) this is the exact empirical diffusion score. For fixed
perturbations the smoothed field is conservative in the Gaussian-mixture
setting, but the induced surrogate densities do not generally follow a
common heat flow as \(t\) varies. The KL drift identity therefore applies
directly only to the unsmoothed construction.

\begin{table}[h]
\centering
\caption{Thresholds used in the anisotropic Gaussian simulation.}
\label{tab:simulation-thresholds}
\small
\setlength{\tabcolsep}{4pt}
\begin{tabular}{lccc}
\toprule
Method & 500 & 1000 & 2000\\
\midrule
CUSUM         & 4.3037 & 5.0234 & 5.8452\\
SCUSUM        & 3.9131 & 4.5986 & 5.3257\\
\methodname{}-quad  & 4.3267 & 5.0942 & 5.8760\\
\methodname{}-exact & 4.2891 & 5.0522 & 5.8115\\
\bottomrule
\end{tabular}
\end{table}

\subsection{Details for Proposition~\ref{prop:finite-window-information}}
\label{app:finite-window-information}

\begingroup

Applying Lemma~\ref{lem:heat-flow-kl} to
\((P_{1,t},P_{\infty,t})\) over \([0,T]\) gives
\[
\int_0^T D_F(P_{1,t}\|P_{\infty,t})\,dt
=\operatorname{KL}(P_1\|P_\infty)
-\operatorname{KL}(P_{1,T}\|P_{\infty,T}).
\]
This proves the first identity. Dividing by the positive initial KL
divergence gives the formula for \(R(T)\).

The function \(R(T)\) is a cumulative information curve. If it approaches
one quickly, a short diffusion window captures most of the forward KL
information; if it grows slowly, later diffusion times carry a non-negligible
fraction. The sampling density \(f\) controls how the integral is estimated
but cannot change this curve. This identity compares information targets; a
sequential delay comparison additionally requires equal false-alarm
calibration, as imposed in Remark~
\ref{rem:anisotropic-gaussian-separation} and Section~
\ref{sec:simulation}.
\endgroup

\subsection{Details for Remark~\ref{rem:anisotropic-gaussian-separation}}
\label{app:proof-anisotropic-gaussian-separation}

\begingroup

\paragraph{Derivation.}
Let $m=(\theta_0+\theta_1)/2$ and define
\[
A=\delta^\top\Sigma^{-2}\delta,
\qquad
B=\delta^\top\Sigma^{-3}\delta.
\]
The SCUSUM score difference is
$S_H(X;p_\infty)-S_H(X;p_1)=\delta^\top\Sigma^{-2}(X-m)$.
It has mean $-A/2$ and variance $B$ under $P_\infty$, and mean $A/2$
under $P_1$.  For the scaled increment $z=\lambda
\delta^\top\Sigma^{-2}(X-m)$,
\[
\mathbb E_\infty[e^{s z}]
=\exp\!\left(-\frac{s\lambda A}{2}
+\frac{s^2\lambda^2B}{2}\right).
\]
Its positive Cram\'er root is $A/(\lambda B)$.  Choosing the usual
unit-root normalization gives $\lambda=A/B$, so the post-change mean is
$I_{\mathrm{SCUSUM}}=\lambda A/2=A^2/(2B)$.
Let $\Sigma v_i=\sigma_i v_i$ and write
\[
\delta=\sum_{i=1}^d c_i v_i.
\]
Then, for $r\in\{1,2,3\}$,
\[
\delta^\top\Sigma^{-r}\delta
=\sum_{i=1}^d\frac{c_i^2}{\sigma_i^r}.
\]
Therefore,
\[
\frac{I_{\mathrm{SCUSUM}}}{I_{\mathrm{CUSUM}}}
=\frac{\left(\sum_i c_i^2/\sigma_i^2\right)^2}
{\left(\sum_i c_i^2/\sigma_i^3\right)
 \left(\sum_i c_i^2/\sigma_i\right)}.
\]
Apply Cauchy--Schwarz to
\[
a_i=\frac{|c_i|}{\sigma_i^{3/2}},
\qquad
b_i=\frac{|c_i|}{\sigma_i^{1/2}}.
\]
It gives
\begin{align*}
\left(\sum_i\frac{c_i^2}{\sigma_i^2}\right)^2
&=\left(\sum_i a_i b_i\right)^2 \\
&\leq
\left(\sum_i\frac{c_i^2}{\sigma_i^3}\right)
\left(\sum_i\frac{c_i^2}{\sigma_i}\right).
\end{align*}
Equality holds exactly when $1/\sigma_i$ is constant on the active set
\(\{i:c_i\neq0\}\).

For the ideal \methodname{} identity, set $C_t=\Sigma+2tI_d$ and
$m=(\theta_0+\theta_1)/2$. Twice the Hyv\"arinen-score difference at time
$t$ is
\[
2\left\{S_H(y;p_{\infty,t})-S_H(y;p_{1,t})\right\}
=2\delta^\top C_t^{-2}(y-m).
\]
Averaging over $Y=X+\sqrt{2t}\,W$ conditional on $X$ replaces $Y$ by $X$.
Since
\[
2\int_0^\infty(\Sigma+2tI_d)^{-2}\,dt=\Sigma^{-1},
\]
the fully integrated Rao--Blackwellized increment is
\[
\delta^\top\Sigma^{-1}(X-m)
=\log\frac{p_1(X)}{p_\infty(X)}.
\]
It therefore has the same information number as likelihood-ratio CUSUM.

\paragraph{Geometric Interpretation.}
In the eigenbasis of \(\Sigma\), likelihood-ratio CUSUM weights the component
\(c_i\) by \(1/\sigma_i\), whereas the zero-time score difference weights it
by \(1/\sigma_i^2\). The Cram\'er multiplier is scalar, so it can align these
weights only when all eigendirections carrying the shift have the same
variance. Along the heat flow, the directional weight becomes
\((\sigma_i+2t)^{-2}\). Its integral is \(1/(2\sigma_i)\), which restores the
inverse-variance geometry direction by direction. This is why ideal \methodname{}
matches likelihood-ratio CUSUM in this Gaussian model while SCUSUM can be
strictly less informative.
\endgroup

\subsection{Adaptive Weighting across Diffusion Times}
\label{app:weighted-discusum-details}

\begin{remark}[Adaptive weighting across diffusion times]
\label{rem:weighted-discusum}
Unlike the sampling density \(f\), which changes only estimator variance, an
offline linear combination of zero-time and Rao--Blackwellized diffusion
features changes the target evidence. For example, with fixed quadrature
times \(t_1,\ldots,t_L\), one may define
\[
z_n^{\mathrm{ad}}
=a_0\Delta_0(X_n)
+\sum_{\ell=1}^L a_\ell\overline G_{t_\ell}(X_n),
\qquad a_\ell\geq0,
\]
where \(\overline G_t\) is the Rao--Blackwellized contrast. If
\(\theta(a)>0\) solves
\(
\mathbb E_\infty[e^{\theta(a)z_n^{\mathrm{ad}}}]=1
\), then \(\theta(a)z_n^{\mathrm{ad}}\) has unit pre-change Cram\'er-root
normalization. The weights and normalization must be estimated and fixed
before monitoring; because they change the target integral, the KL guarantees
of Theorems~\ref{thm:false-alarm} and \ref{thm:detection-delay} do not follow
automatically. The construction below gives an example; its empirical evaluation is left to future work.
\end{remark}

\begingroup

Choose diffusion times \(t_1,\ldots,t_L>0\) and form
\[
\Phi(x)=\bigl(\phi_0(x),\overline G_{t_1}(x),\ldots,
\overline G_{t_L}(x)\bigr)^\top,
\qquad
\overline G_t(x)=\mathbb E_W[G_t(x,W)\mid X=x],
\]
where \(\phi_0\) is the zero-time SCUSUM evidence. Let
\(\widehat\mu_\infty\) and \(\widehat\mu_1\) be the feature means estimated
from the two reference models, and let \(\widehat\Sigma_\Phi\) be their pooled
covariance. For \(\rho>0\), define
\begin{equation}
\widehat\beta=(\widehat\Sigma_\Phi+\rho I)^{-1}
(\widehat\mu_1-\widehat\mu_\infty),
\qquad
\widehat b=-\frac12\widehat\beta^\top
(\widehat\mu_\infty+\widehat\mu_1).
\label{eq:weighted-beta}
\end{equation}
The raw evidence is
\(g_{\mathrm{AW}}(x)=\widehat\beta^\top\Phi(x)+\widehat b\). The Fisher
direction favors features with large estimated mean separation relative to
their covariance, while the intercept centers the fitted class means at
\[
\mp\frac12\widehat\beta^\top
(\widehat\mu_1-\widehat\mu_\infty).
\]
Thus the fitted pre- and post-change means have the intended signs whenever
the estimated feature means differ. By comparison, finite-window quadrature
fixes the coefficients of \(\overline G_{t_\ell}\) at their numerical
integration weights; it does not estimate a discriminant direction.

If the pre-change moment-generating function is finite near the origin and
admits \(\kappa_{\mathrm{AW}}>0\) satisfying
\[
\mathbb E_\infty
\!\left[e^{\kappa_{\mathrm{AW}}g_{\mathrm{AW}}(X)}\right]=1,
\]
then the unit-root-normalized increment and stopping rule are
\begin{align*}
z_n^{\mathrm{AW}}&=\kappa_{\mathrm{AW}}g_{\mathrm{AW}}(X_n),&
Z_n^{\mathrm{AW}}&=\max\{0,Z_{n-1}^{\mathrm{AW}}+z_n^{\mathrm{AW}}\},\\
T_{\mathrm{AW}}&=\inf\{n\geq1:Z_n^{\mathrm{AW}}\geq\tau\}.&&
\end{align*}
All feature weights, the intercept, and the Cram\'er root must be estimated
offline and fixed before monitoring. Because they are learned rather than
specified by the heat-flow integral, the resulting statistic requires its own
false-alarm calibration and does not automatically satisfy the KL drift
identities of \methodname{}.
\endgroup

\end{document}